\let\originalleft\left
\let\originalright\right
\renewcommand{\left}{\mathopen{}\mathclose\bgroup\originalleft}
\renewcommand{\right}{\aftergroup\egroup\originalright}
\theoremstyle{plain}
\newtheorem{theorem}{Theorem}
\newtheorem{proposition}[theorem]{Proposition}
\newtheorem{lemma}[theorem]{Lemma}
\newtheorem{corollary}[theorem]{Corollary}
\theoremstyle{definition}
\setlist[enumerate]{itemsep=0.2ex, topsep=0.5\topsep}
\setlist[description]{itemsep=0.2ex, topsep=0.5\topsep}
\setlist[itemize]{itemsep=0.2ex, topsep=0.5\topsep}
\def\thmt@refnamewithcomma #1#2#3,#4,#5\@nil{%
\@xa\def\csname\thmt@envname #1utorefname\endcsname{#3}%
\ifcsname #2refname\endcsname
\csname #2refname\expandafter\endcsname\expandafter{\thmt@envname}{#3}{#4}%
\fi
}
\newcommand{\Nb}{\mathbb{N}}
\newcommand{\Rb}{\mathbb{R}}
\newcommand{\kwl}[1]{$#1$\textrm{-}\textsf{WL}\xspace}
\newcommand{\fkwl}[1]{$#1$\textrm{-}\textsf{FWL}}
\newcommand{\kfgt}[1]{ET\xspace}
\newcommand{\wlone}{$1$\textrm{-}\textsf{WL}}
\newcommand{\REL}{\mathsf{recolor}}
\newcommand{\new}[1]{\emph{#1}}
\renewcommand{\vec}[1]{\bm{#1}}
\newcommand{\oms}{\{\!\!\{}
\newcommand{\cms}{\}\!\!\}}
\definecolor{dark2green}{rgb}{0.1, 0.65, 0.3}
\definecolor{dark2orange}{rgb}{0.9, 0.4, 0.}
\definecolor{dark2purple}{rgb}{0.4, 0.4, 0.8}
\definecolor{sns_green}{HTML}{2CA02C}
\definecolor{sns_orange}{HTML}{FF7F0E}
\definecolor{sns_blue}{HTML}{1F77B4}
\colorlet{first_color}{sns_green!40}
\colorlet{second_color}{sns_blue!50}
\colorlet{third_color}{sns_orange!40}
\newcommand{\first}[1]{\fbox{\textbf{\textcolor{sns_green}{#1}}}}
\newcommand{\second}[1]{\underline{\textbf{\textcolor{sns_orange}{#1}}}}
\newcommand{\third}[1]{\textbf{\textcolor{sns_blue}{#1}}}
\newcommand{\triattn}{\mathsf{TriAttention}}
\colorlet{CustomDandelion}{Dandelion!50}
\colorlet{CustomPeriwinkle}{Periwinkle!50}
\colorlet{CustomJungleGreen}{JungleGreen!30}
\colorlet{CustomThistle}{Thistle!50}
\newcommand{\highlight}[2]{#2}
\title{Towards Principled Graph Transformers}
\author{%
  Luis Müller \\
  RWTH Aachen University \\
  \texttt{luis.mueller@cs.rwth-aachen.de} \\
  \and
  \textbf{Daniel Kusuma} \\
  RWTH Aachen University \\
  \and
  \textbf{Blai Bonet} \\
  Universitat Pompeu Fabra \\
  \and
  \textbf{Christopher Morris} \\
  RWTH Aachen University \\
}
\newcommand{\Omit}[1]{}
\begin{document}

\maketitle

\begin{abstract}
The expressive power of graph learning architectures based on the $k$-dimensional Weisfeiler--Leman (\kwl{k}) hierarchy is well understood. However, such architectures often fail to deliver solid predictive performance on real-world tasks, limiting their practical impact. In contrast, global attention-based models such as graph transformers demonstrate strong performance in practice. However, comparing their expressivity with the \kwl{k} hierarchy remains challenging, particularly since attention-based architectures rely on positional or structural encodings. To address this, we show that the recently proposed Edge Transformer, a global attention model operating on node pairs instead of nodes, has \kwl{3} expressive power when provided with the right tokenization. Empirically, we demonstrate that the Edge Transformer surpasses other theoretically aligned architectures regarding predictive performance and is competitive with state-of-the-art models on algorithmic reasoning and molecular regression tasks while not relying on positional or structural encodings. Our code is available at \url{https://github.com/luis-mueller/towards-principled-gts}. 
\end{abstract}

\section{Introduction}
Graph Neural Networks (GNNs) are the de-facto standard in graph learning \citep{Gil+2017, Sca+2009, Kip+2017, Xu+2018b} but suffer from limited expressivity in distinguishing non-isomorphic graphs in terms of the \new{$1$-dimensional Weisfeiler--Leman algorithm} (\wlone)~\citep{Mor+2019, Xu+2018b}. Hence, recent works introduced \textit{higher-order} GNNs, aligned with the $k$-dimensional Weisfeiler--Leman (\kwl{k}) hierarchy for graph isomorphism testing \citep{Azi+2020, Mar+2019c, Mor+2019, Morris2020b, Mor+2022b}, resulting in more expressivity with an increase in $k > 1$. The \kwl{k} hierarchy draws from a rich history in graph theory and logic \citep{Bab1979, Bab+1979, Bab+1980, Cai+1992,Wei+1968}, offering a deep theoretical understanding of \kwl{k}-aligned GNNs. While theoretically intriguing, higher-order GNNs often fail to deliver state-of-the-art performance on real-world problems, making theoretically grounded models less relevant in practice \citep{Azi+2020, Morris2020b, Mor+2022b}. In contrast, graph transformers~\citep{Glickman+2023,He+2022,ma2023GraphInductiveBiases,rampavsek2022recipe,Ying2021} recently demonstrated state-of-the-art empirical performance. However, they draw their expressive power mostly from positional/structural encodings (PEs), making it difficult to understand these models in terms of an expressivity hierarchy such as the \kwl{k}. While a few works theoretically aligned graph transformers with the \kwl{k} hierarchy \citep{Kim+2021, Kim+2022, Zha+2023}, we are not aware of any works reporting empirical results for \kwl{3}-equivalent graph transformers on established graph learning datasets.

In this work, we aim to set the ground for graph learning architectures that are theoretically aligned with the higher-order Weisfeiler--Leman hierarchy while delivering strong empirical performance and, at the same time, demonstrate that such an alignment creates powerful synergies between transformers and graph learning. Hence, we close the gap between theoretical expressivity and real-world predictive power. To this end, we apply the \new{Edge Transformer} (ET) architecture, initially developed for \new{systematic generalization} problems \citep{Bergen+2021}, to the field of graph learning. Systematic (or compositional) generalization refers to the ability of a model to generalize to complex novel concepts by combining primitive concepts observed during training, posing a challenge to even the most advanced models such as GPT-4 \citep{Dziri+2023+Faith}.

Specifically, we contribute the following:
\begin{enumerate}
    \item We propose a concrete implementation of the Edge Transformer that readily applies to various graph learning tasks.
    \item We show theoretically that this Edge Transformer implementation is as expressive as the \kwl{3} \textit{without} the need for positional/structural encodings.
    \item We demonstrate the benefits of aligning models with the \kwl{k} hierarchy by leveraging well-established results from graph theory and logic to develop a theoretical understanding of systematic generalization in terms of first-order logic statements.
    \item We demonstrate the superior empirical performance of the resulting architecture compared to a variety of other theoretically motivated models, as well as competitive performance compared to state-of-the-art models in molecular regression and neural algorithmic reasoning tasks.
\end{enumerate}

\begin{figure}
    \begin{minipage}{0.475\textwidth}
        \centering
        \vspace{42px}
        \includegraphics[scale=0.85]{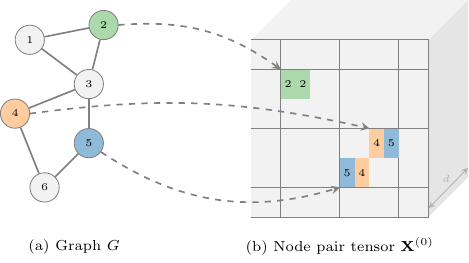}
        \caption{Tokenization of the Edge Transformer. Given a graph $G$, we construct a 3D tensor where we embed information from each node pair into a $d$ dimensional vector.}
        \label{fig:tokenization}
    \end{minipage}
    \hspace{10px}
    \begin{minipage}{0.475\textwidth}
        \centering
        \includegraphics[scale=0.8]{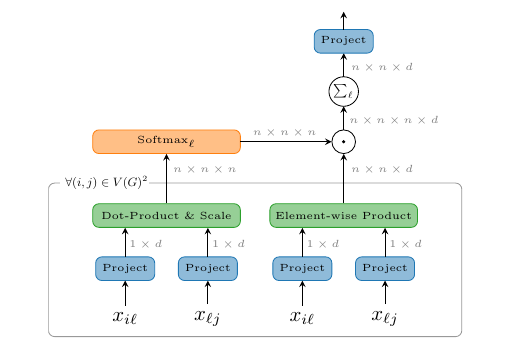}
        \caption{Tensor operations in a single triangular attention head; see \Cref{algo:comparison} for a comparison to standard attention in pseudo-code.}
        \label{fig:triangular_attention_ops}
    \end{minipage}
    
\end{figure}

\section{Related work}
Many graph learning models with higher-order WL expressive power exist, notably $\delta$-$k$-GNNs \citep{Morris2020b}, SpeqNets \citep{Mor+2022b}, $k$-IGNs \citep{Mar+2019b, Mar+2019c}, PPGN \citep{Mar+2019},  and the more recent PPGN++ \citep{Puny+2023}. Moreover, \citet{yaronlipman2020global} devise a low-rank attention module possessing the same power as the folklore \kwl{2} and \citet{Bod+2021b} propose CIN with an expressive power of at least \kwl{3}. For an overview of Weisfeiler--Leman in graph learning, see \citet{Mor+2022}. 

Many graph transformers exist, notably Graphormer \citep{Ying2021} and GraphGPS \citep{rampavsek2022recipe}. However, state-of-the-art graph transformers typically rely on positional/structural encodings, which makes it challenging to derive a theoretical understanding of their expressive power.
The Relational Transformer (RT) \citep{diao+2023+rt} operates over both nodes and edges and, similar to the ET, builds relational representations, that is, representations on edges. Although the RT integrates edge information into self-attention and hence does not need to resort to positional/structural encodings, the RT is theoretically poorly understood, much like other graph transformers. Graph transformers with higher-order expressive power are Graphormer-GD \citep{Zha+2023} and TokenGT \citep{Kim+2022} as well as the higher-order graph transformers in \citet{Kim+2021}.
However, Graphormer-GD is strictly less expressive than the \kwl{3}\citep{Zha+2023}. Further, \citet{Kim+2021} and \citet{Kim+2022} align transformers with $k$-IGNs and, thus, obtain the theoretical expressive power of the corresponding \kwl{k} but do not empirically evaluate their transformers for $k > 2$. In addition, these higher-order transformers suffer from a $\mathcal{O}(n^{2k})$ runtime and memory complexity. For $k=3$, the ET offers provable \kwl{3} expressivity with $\mathcal{O}(n^3)$ runtime and memory complexity, several orders of magnitude more efficient than the corresponding \kwl{3} expressive transformer in \citet{Kim+2022}. 
For an overview of graph transformers, see \citet{Mue+2023}. %

Finally, systematic generalization has recently been investigated both empirically and theoretically \citep{Bergen+2021, Dziri+2023+Faith,Keysers+2020+Measuring,Ren+2023+Improving}. In particular, \citet{Dziri+2023+Faith} demonstrate that compositional generalization is lacking in state-of-the-art transformers such as GPT-4.

\section{Edge Transformers}\label{sec:et}
The ET was originally designed to improve the systematic generalization abilities of machine learning models. To borrow the example from \citet{Bergen+2021}, a model that is presented with relations such as $\textsc{mother}(x, y)$, indicating that $y$ is the mother of $x$, could generalize to a more complex relation $\textsc{grandmother}(x, z)$, indicating that $z$ is the grandmother of $x$ if $\textsc{mother}(x, y) \wedge \textsc{mother}(y, z)$ holds. The particular form of attention used by the ET, which we will formally introduce hereafter, is designed to explicitly model such more complex relations. Indeed, leveraging our theoretical results of \Cref{sec:theory}, in \Cref{sec:logic}, we formally justify the ET for performing systematic generalization. We will now formally define the ET; see \Cref{app:notation} for a complete description of our notation.

In general, the ET operates on a graph $G$ with nodes $V(G)$ and consecutively updates a 3D tensor state $\vec{X} \in \mathbb{R}^{n \times n \times d}$, where $d$ is the embedding dimension and $\vec{X}_{ij}$ or $\vec{X}(\vec{u})$ denotes the representation of the node pair $\vec{u} \coloneqq (i,j) \in V(G)^2$; see \Cref{fig:tokenization} for a visualization of this construction. Concretely, the $t$-th ET layer computes

\begin{equation*}
    \vec{X}^{(t)}_{ij} \coloneqq \mathsf{FFN} \bigl( \vec{X}^{(t-1)}_{ij} + \triattn \bigl(\mathsf{LN}\bigl(\vec{X}^{(t-1)}_{ij} \bigr) \bigr)\bigr),
\end{equation*}

for each node pair $(i,j)$, where $\mathsf{FFN}$ is a feed-forward neural network, $\mathsf{LN}$ denotes layer normalization \citep{Ba+2016} and $\triattn$ is defined as
\begin{equation}\label{eq:2fwl_tf_head}
    \triattn(\vec{X}_{ij}) \coloneqq \sum_{l=1}^n  \alpha_{ilj} \vec{V}_{ilj}, 
\end{equation}
which computes a tensor product between a three-dimensional \textit{attention tensor} $\alpha$ and a three-dimensional \textit{value tensor} $\mathbf{V}$, by multiplying and summing over the second dimension. Here,

\begin{equation}\label{eq:edge_attention_score}
    \alpha_{ilj} \coloneqq \underset{l \in [n]}{\mathsf{softmax}} \Bigl( \frac{1}{\sqrt{d}} \vec{X}_{il}\vec{W}^Q  \bigl(\vec{X}_{lj}\vec{W}^K \bigr)^T \Bigr) \in \mathbb{R}
\end{equation}

is the attention score between the features of tuples $(i, l)$ and $(l, j)$, and

\begin{equation}\label{eq:2fwl_value_fusion}
    \vec{V}_{ilj} \coloneqq \vec{X}_{il}\vec{W}^{V_1} \odot \vec{X}_{lj}\vec{W}^{V_2},
\end{equation}

we call \new{value fusion} of the tuples $(i, l)$ and $(l, j)$ with $\odot$ denoting element-wise multiplication. Moreover,
$\vec{W}^Q, \vec{W}^K, \vec{W}^{V_1}, \vec{W}^{V_2} \in \mathbb{R}^{d \times d}$ are learnable projection matrices; see \Cref{fig:triangular_attention_ops} for an overview of the tensor operations in triangular attention and see \Cref{algo:comparison} for a comparison to standard attention  \citep{Vaswani2017} in pseudo-code. Note that similar to standard attention, triangular attention can be straightforwardly extended to multiple heads. 

\begin{algorithm}
\caption{Comparison between standard attention and triangular attention in \textsc{PyTorch}-like pseudo-code.}\label{algo:comparison}
\begin{minipage}{0.5\textwidth}
\begin{algorithmic}
    \Function{\textsf{attention}}{$\mathbf{X}: n \times d$} \vspace{1px}
        \State $\mathbf{Q}, \mathbf{K}, \mathbf{V} \gets \textsf{linear}(\mathbf{X})\textsf{.chunk}(3)$ \vspace{1px}
        \State \textcolor{sns_orange}{\# no op} \vspace{1px}
        \State $\Tilde{\mathbf{A}} \gets \textsf{einsum}(\textcolor{sns_green}{id, jd \rightarrow ij}, \mathbf{Q}, \mathbf{K})$ \vspace{1px}
        \State $\mathbf{A} \gets \textsf{softmax}(\Tilde{\mathbf{A}} / \sqrt{d}, -1)$ \vspace{2px}
        \State $\mathbf{O} \gets \textsf{einsum}(\textcolor{sns_green}{ij, jd \rightarrow id}, \mathbf{A}, \mathbf{V})$ \vspace{1px}
        \State \Return $\textsf{linear}(\mathbf{O})$ \vspace{1px}
    \EndFunction
\end{algorithmic}
\end{minipage}
\begin{minipage}{0.5\textwidth}
\begin{algorithmic}
    \Function{\textsf{tri\_attention}}{$\mathbf{X}: n \times n \times d$} \vspace{1px}
        \State $\mathbf{Q}, \mathbf{K}, \mathbf{V}^1, \mathbf{V}^2 \gets \textsf{linear}(\mathbf{X})\textsf{.chunk}(4)$ \vspace{1px}
        \State $\mathbf{V} \gets \textsf{einsum}(\textcolor{sns_green}{ild, ljd \rightarrow iljd}, \mathbf{V}^1, \mathbf{V}^2)$ \vspace{1px}
        \State $\Tilde{\mathbf{A}} \gets \textsf{einsum}(\textcolor{sns_green}{ild, ljd \rightarrow ilj}, \mathbf{Q}, \mathbf{K})$ \vspace{1px}
        \State $\mathbf{A} \gets \textsf{softmax}(\Tilde{\mathbf{A}} / \sqrt{d}, -1)$ \vspace{2px}
        \State $\mathbf{O} \gets \textsf{einsum}(\textcolor{sns_green}{ilj, iljd \rightarrow ijd}, \mathbf{A}, \mathbf{V})$ \vspace{1px}
        \State \Return $\textsf{linear}(\mathbf{O})$ \vspace{1px}
    \EndFunction
\end{algorithmic}
\end{minipage}
\end{algorithm}

As we will show in \Cref{sec:theory}, the ET owes its expressive power to the special form of triangular attention. In our implementation of the ET, we use the following tokenization, which is sufficient to obtain our theoretical results.

\paragraph{Tokenization} We consider graphs $G \coloneqq (V(G), E(G), \ell)$ with $n$ nodes and without self-loops, where $V(G)$ is the set of nodes, $E(G)$ is the set of edges, and $\ell \colon V(G) \rightarrow \Nb$ assigns an initial \new{color} to each node. We construct a feature matrix $\vec{F} \in \mathbb{R}^{n \times p}$ that is \new{consistent} with $\ell$, i.e., for nodes $i$ and $j$ in $V(G)$, $\vec{F}_i = \vec{F}_j$ if and only if, $\ell(i) = \ell(j)$. Note that, for a finite subset of $\mathbb{N}$, we can always construct $\vec{F}$, e.g., using a one-hot encoding of the initial colors. Additionally, we consider an edge feature tensor $\vec{E} \in \mathbb{R}^{n \times n \times q}$, where $\vec{E}_{ij}$ denotes the edge feature of the edge $(i, j) \in E(G)$.
If no edge features are available, we randomly initialize learnable vectors $\vec{x}_1, \vec{x}_2 \in \mathbb{R}^q$
and assign $\vec{x}_1$ to $\vec{E}_{ij}$ if $(i,j) \in E(G)$. Further, for all $i \in V(G)$, we assign $\vec{x}_2$ to $\vec{E}_{ii}$.
Lastly, if $(i, j) \not \in E(G)$ and $i \neq j$, we set $\vec{E}_{ij} = \vec{0}$.
We then construct a 3D tensor of input tokens $\vec{X} \in \mathbb{R}^{n \times n \times d}$, such that for node pair $(i,j) \in V(G)^2$,

\begin{equation}\label{eq:higher_order_tokens}
   \vec{X}_{ij} \coloneqq \phi \big ( \begin{bmatrix}
        \vec{E}_{ij} & \vec{F}_i & \vec{F}_j
    \end{bmatrix} \big),
\end{equation}

where $\phi \colon \mathbb{R}^{2p+q} \rightarrow \mathbb{R}^{d}$ is a neural network. Extending \citet{Bergen+2021}, our tokenization additionally considers node features, making it more appropriate for the graph learning setting.

\paragraph{Efficiency} The triangular attention above imposes a $\mathcal{O}(n^3)$  runtime and memory complexity, which is significantly more efficient than other transformers with \kwl{3} expressive power, such as the higher-order transformers in \citet{Kim+2021} and \citet{Kim+2022} with a runtime of $\mathcal{O}(n^6)$. Nonetheless, the ET is still significantly less efficient than most graph transformers, with a runtime of $\mathcal{O}(n^2)$ \citep{ma2023GraphInductiveBiases,rampavsek2022recipe,Ying2021}. Thus, the ET is currently only applicable to mid-sized graphs; see \Cref{sec:limitations} for an extended discussion of this limitation.

\paragraph{Positional/structural encodings} Additionally, GNNs and graph transformers often benefit empirically from added positional/structural encodings \citep{Dwivedi2022, ma2023GraphInductiveBiases, rampavsek2022recipe}. We can easily add PEs to the above tokens with the ET. Specifically, we can encode any PEs for node $i \in V(G)$ as an edge feature in $\vec{E}_{ii}$ and any PEs between a node pair $(i,j) \in V(G)^2$ as an edge feature in $\vec{E}_{ij}$. Note that typically, PEs between pairs of nodes are incorporated during the attention computation of graph transformers \citep{ma2023GraphInductiveBiases, Ying2021}. However, in \Cref{sec:experiments}, we demonstrate that simply adding these PEs to our tokens is also viable for improving the empirical results of the ET.

\paragraph{Readout} Since the Edge Transformer already builds representations on node pairs, making predictions for node pair- or edge-level tasks is straightforward. Specifically, let $L$ denote the number of Edge Transformer layers. Then, for a node pair $(i,j) \in V(G)^2$, we simply readout $\vec{X}^{(L)}_{ij}$, where on the edge-level we restrict ourselves to the case where $(i,j) \in E(G)$. In the case of nodes, we can for example read out the diagonal of $\vec{X}^{(L)}$, that is, the representation for node $i \in V(G)$ is $\vec{X}^{(L)}_{ii}$. In addition to the diagonal readout, we also design a more sophisticated read out strategy for nodes which we describe in \Cref{app:readout}.

With tokenization and readout as defined above, the ET can now be used on many graph learning problems, encoding both node and edge features and making predictions for node pair-, edge-, node-, and graph-level tasks. We refer to a concrete set of parameters of the ET, including tokenization and positional/structural encodings, as a \new{parameterization}. We now move on to our theoretical result, showing that the ET has the same expressive power as the \kwl{3}.

\section{The expressivity of Edge Transformers}\label{sec:theory}
Here, we relate the ET to the \textit{folklore} Weisfeiler--Leman (\fkwl{k}) hierarchy, a variant of the \kwl{k} hierarchy for which, for $k > 2$, \fkwl{(k-1)} is as expressive as \kwl{k} \citep{Gro+2021}. Specifically, we show that the ET can simulate the \fkwl{2}, resulting in \kwl{3} expressive power. To this end, we briefly introduce the \fkwl{2} and then show our result. For detailed background on the \kwl{k} and \fkwl{k} hierarchy, see \Cref{kwl_intro}.

\paragraph{Folklore Weisfeiler--Leman}
Let $G \coloneqq (V(G), E(G), \ell)$ be a node-labeled graph.
The \fkwl{2} colors the tuples from $V(G)^2$, similar to the way the \wlone{} colors nodes \citep{Mor+2019}. In each iteration, $t \geq 0$, the algorithm computes a
\new{coloring} $C^{2,\text{F}}_t \colon V(G)^2 \to \Nb$ and we write $C^{2,\text{F}}_t(i,j)$ or $C^{2,\text{F}}_t(\vec{u})$ to denote the color of tuple $\vec{u} \coloneqq (i,j) \in V(G)^2$ at iteration $t$. For $t=0$, we assign colors to distinguish pairs $(i,j)$ in $V(G)^2$ based on the initial colors $\ell(i), \ell(j)$ of their nodes and whether $(i,j) \in E(G)$ or $i = j$. For a formal definition of the initial node pair colors, see \Cref{app:notation}. Then, for each iteration, $t > 0$, the coloring $C^{2,\text{F}}_{t}$ is defined as

\begin{equation*}
	C^{2,\text{F}}_{t}(i, j) \coloneqq \REL\big( (C^{2,\text{F}}_{t-1}(i,j), \, M_{t-1}(i,j))\big),
\end{equation*}

where $\REL$ injectively maps the above pair to a unique natural number that has not been used in previous iterations and
\begin{equation*}
    M_{t-1}(i,j) \coloneqq \oms (  C^{2, \text{F}}_{t-1}(i, l), \, C^{2, \text{F}}_{t-1}(l, j) ) \mid l \in V(G) \cms.
\end{equation*}

We show that the ET can simulate the \fkwl{2}, resulting in at least \kwl{3} expressive power. Further, we show that the ET is also, at most, as expressive as the \kwl{3}. As a result, we obtain the following theorem; see \Cref{app:proof} for a formal statement and proof details.
\begin{theorem}[Informal]\label{theorem:k_fwl}
    The ET has exactly \kwl{3} expressive power.
\end{theorem}
Note that following previous works \citep{Mar+2019, Morris2020b, Mor+2022b}, our expressivity result is \new{non-uniform} in that our result only holds for an arbitrary but fixed graph size $n$; see \Cref{prop:theorem_forward} and \Cref{prop:theorem_backward} for the complete formal statements and proof of \Cref{theorem:k_fwl}.

In the following, we provide some intuition of how the ET can simulate the \fkwl{2}. Given a tuple $(i,j) \in V(G)^2$, we encode its color at iteration $t$ with $\vec{X}^{(t)}_{ij}$.
Further, to represent the multiset
\begin{equation*}
	\oms (  C^{2, \text{F}}_{t-1}(i, l), \, C^{2, \text{F}}_{t-1}(l, j) ) \mid l \in V(G) \cms,
\end{equation*}

we show that it is possible to encode the pair of colors
\begin{equation*}
   (  C^{2, \text{F}}_{t-1}(i, l), \, C^{2, \text{F}}_{t-1}(l, j) ) \quad \text{via} \quad   \vec{X}^{(t-1)}_{il}\vec{W}^{V_1} \odot \vec{X}^{(t-1)}_{lj}\vec{W}^{V_2},
\end{equation*}
for node $l \in V(G)$. Finally, triangular attention in \Cref{eq:2fwl_tf_head}, performs weighted sum aggregation over the $2$-tuple of colors $(  C^{2, \text{F}}_{t-1}(i, l), \, C^{2, \text{F}}_{t-1}(l, j))$ for each $l$, which we show is sufficient to represent the multiset; see \Cref{app:proof}. For the other direction, namely that the ET has at most \kwl{3} expressive power, we simply show that the $\REL$ function can simulate the value fusion in \Cref{eq:2fwl_value_fusion}, as well as the triangular attention in \Cref{eq:2fwl_tf_head}.

Interestingly, our proofs do not resort to positional/structural encodings. The ET draws its \kwl{3} expressive power from its aggregation scheme, the triangular attention. In~\Cref{sec:experiments}, we demonstrate that this also holds in practice, where the ET performs strongly without additional encodings. In what follows, we use the above results to derive a more principled understanding of the ET in terms of systematic generalization, for which it was originally designed. Thereby, we demonstrate that graph theoretic results can also be leveraged in other areas of machine learning, further highlighting the benefits of theoretically grounded models.

\section{The logic of Edge Transformers}\label{sec:logic}
After borrowing the ET from systematic generalization in the previous section, we now return the favor. Specifically, we use a well-known connection between graph isomorphism and first-order logic to obtain a theoretical justification for systematic generalization reasoning using the ET.  Recalling the example around the \textsc{grandmother} relation composed from the more primitive \textsc{mother} relation in \Cref{sec:et}, \citet{Bergen+2021} go ahead and argue that since self-attention of standard transformers is defined between pairs of nodes, learning explicit representations of $\textsc{grandmother}$ is impossible and that learning such representations implicitly incurs a high burden on the learner. Conversely, the authors argue that since the ET computes triangular attention over triplets of nodes and computes explicit representations between node pairs, the Edge Transformer can systematically generalize to relations such as $\textsc{grandmother}$. While \citet{Bergen+2021} argue the above intuitively, we will now utilize the connection between first-order logic (FO-logic) and graph isomorphism established in \citet{Cai+1992} to develop a theoretical understanding of systematic generalization; see \Cref{app:notation} for an introduction to first-order logic over graphs. We will now briefly introduce the most important concepts in \citet{Cai+1992} and then relate them to systematic generalization of the ET and similar models.

\paragraph{Language and configurations}
Here, we consider FO-logic statements with counting quantifiers and denote with $\mathcal{C}_{k, m}$ the language of all such statements with at most $k$ variables and quantifier depth $m$.
A \new{configuration} is a map between first-order variables and nodes in a graph. Concretely, configurations let us define a statement $\varphi$ in first-order logic, such as three nodes forming a triangle, without speaking about concrete nodes in a graph $G = (V(G), E(G))$. Instead, we can use a configuration to map the three variables in $\varphi$ to nodes $v, w, u \in V(G)$ and evaluate $\varphi$ to determine whether $v, w$ and $u$ form a triangle in $G$. Of particular importance to us are $k$-configurations $f$ where we map $k$ variables $x_1, \dots, x_k$ in a FO-logic statement to a $k$-tuple $\vec{u} \in V(G)^k$ such that $\vec{u} = (f(x_1), \dotsc, f(x_k))$. This lets us now state the following result in \citet{Cai+1992}, relating FO-logic satisfiability to the \fkwl{k} hierarchy. Here, $C_t^{k, \textrm{F}}$ denotes the coloring of the \fkwl{k} after $t$ iterations; see \Cref{app:fkwl_intro} for a precise definition.
\begin{theorem}[Theorem 5.2 \citep{Cai+1992}, informally]
Let $G \coloneqq (V(G), E(G))$ and $H \coloneqq (V(H), E(H))$ be two graphs with $n$ nodes and let $k \geq 1$. Let $f$ be a $k$-configuration mapping to tuple $\vec{u} \in V(G)^k$ and let $g$ be a $k$-configuration mapping to tuple $\vec{v} \in V(H)^k$. Then, for every $t \geq 0$, 
\begin{equation*}
    C_t^{k, \textrm{F}}(\vec{u}) = C_t^{k, \textrm{F}}(\vec{v}),
\end{equation*}
if and only if $\vec{u}$ and $\vec{v}$ satisfy the same sentences in $\mathcal{C}_{k+1,t}$ whose free variables are in $\{x_1,x_2,\ldots,x_k\}$. %
\end{theorem}
Together with \Cref{theorem:k_fwl}, we obtain the above results also for the embeddings of the ET for $k = 2$.
\begin{corollary}
Let $G\coloneqq(V(G), E(G))$ and $H \coloneqq (V(H), E(H))$ be two graphs with $n$ nodes and let $k = 2$. Let $f$ be a $2$-configuration mapping to node pair $\vec{u} \in V(G)^2$ and let $g$ be a $2$-configuration mapping to node pair $\vec{v} \in V(H)^k$. Then, for every $t \geq 0$, 
\begin{equation*}
    \vec{X}^{(t)}(\vec{u}) = \vec{X}^{(t)}(\vec{v}),
\end{equation*}
if and only if $\vec{u}$ and $\vec{v}$ satisfy the same sentences in $\mathcal{C}_{3,t}$ whose free variables are in $\{x_1,x_2\}$. %
\end{corollary}

\paragraph{Systematic generalization}

Returning to the example in \citet{Bergen+2021}, the above result tells us that a model with \fkwl{2} expressive power and at least $t$ layers is equivalently able to evaluate sentences in $\mathcal{C}_{3,t}$, including

\begin{equation*}
 \highlight{sns_blue!15}{\textsc{grandmother}}(x, z) 
 =  \exists y \big( \highlight{sns_orange!15}{\textsc{mother}}(x, y) \wedge \highlight{sns_orange!15}{\textsc{mother}}(y, z)\big),
\end{equation*}

i.e., the grandmother relation, and store this information encoded in some 2-tuple representation $\vec{X}^{(t)}(\vec{u})$, where $\vec{u} = (u, v)$ and $\vec{X}^{(t)}(\vec{u})$ encodes whether $u$ is a grandmother of $v$. %
As a result, we have theoretical justification for the intuitive argument made by \citet{Bergen+2021}, namely that the ET can learn an \textit{explicit} representation of a novel concept, in our example the $\textsc{grandmother}$ relation.

However, when closely examining the language $\mathcal{C}_{3,t}$, we find that the above result allows for an even wider theoretical justification of the systematic generalization ability of the ET. Concretely, we will show that once the ET obtains a representation for a novel concept such as the $\textsc{grandmother}$ relation, at some layer $t$, the ET can re-combine said concept to generalize to even more complex concepts. For example, consider the following relation, which we naively write as

\begin{equation*}
   \highlight{sns_green!15}{\textsc{greatgrandmother}}(x, a) = \exists z \exists y \big(\highlight{sns_orange!15}{\textsc{mother}}(x, y) \wedge \highlight{sns_orange!15}{\textsc{mother}}(y, z) \wedge \highlight{sns_orange!15}{\textsc{mother}}(z, a)\big).
\end{equation*}

At first glance, it seems as though $\textsc{greatgrandmother} \in \mathcal{C}_{4,2}$ but $\textsc{greatgrandmother} \not\in \mathcal{C}_{3,t}$ for any $t \geq 1$. However, notice that the variable $y$ serves merely as an intermediary to establish the $\textsc{grandmother}$ relation. Hence, we can, without loss of generality, write the above as

\begin{equation*}
   \highlight{sns_green!15}{\textsc{greatgrandmother}}(x, a) = \exists y \underbrace{\big(\exists a \big(\highlight{sns_orange!15}{\textsc{mother}}(x, a) \wedge \highlight{sns_orange!15}{\textsc{mother}}(a, y) \big) \big)}_\text{$a$ is re-quantified and temporarily bound} \wedge \highlight{sns_orange!15}{\textsc{mother}}(y, a)\big),
\end{equation*}

i.e., we \new{re-quantify} $a$ to temporarily serve as the mother of $x$ and the daughter of $y$. Afterwards, $a$ is released and again refers to the great grandmother of $x$.
As a result, $\textsc{greatgrandmother} \in \mathcal{C}_{3,2}$ and hence the ET, as well as any other model with at least \fkwl{2} expressive power, is able to generalize to the $\textsc{greatgrandmother}$ relation within two layers, by iteratively re-combining existing concepts, in our example the $\textsc{grandmother}$ and the $\textsc{mother}$ relation. This becomes even more clear, by writing

\begin{equation*}
    \highlight{sns_green!15}{\textsc{greatgrandmother}}(x,a) = \exists y \big(\highlight{sns_blue!15}{\textsc{grandmother}}(x, y) \wedge \highlight{sns_orange!15}{\textsc{mother}}(y, a)\big),
\end{equation*}

where $\textsc{grandmother}$ is a generalized concept obtained from the primitive concept $\textsc{mother}$. To summarize, knowing the expressive power of a model such as the ET in terms of the Weisfeiler--Leman hierarchy allows us to draw direct connections to the logical reasoning abilities of the model. Further, this theoretical connection allows an interpretation of systematic generalization as the ability of a model with the expressive power of at least the \fkwl{k} to iteratively re-combine concepts from first principles (such as the $\textsc{mother}$ relation) as a hierarchy of statements in $\mathcal{C}_{k+1,t}$, containing all FO-logic statements with counting quantifiers, at most $k+1$ variables, and quantifier depth $t$.

\section{Experimental evaluation}\label{sec:experiments}
We now investigate how well the ET performs on various graph-learning tasks. We include tasks on graph-, node-, and edge-level. Specifically, we answer the following questions.
\begin{description}
	\item[Q1] How does the ET fare against other theoretically aligned architectures regarding predictive performance?
	\item[Q2] How does the ET compare to state-of-the-art models? 
	\item[Q3] How effectively can the ET benefit from additional positional/structural encodings?
\end{description}
The source code for our experiments %
is available at \url{https://github.com/luis-mueller/towards-principled-gts}.
To foster research in principled graph transformers such as the ET, we provide accessible implementations of ET, both in PyTorch and Jax.

\paragraph{Datasets}
We evaluate the ET on graph-, node-, and edge-level tasks from various domains to demonstrate its versatility. 

On the graph level, we evaluate the ET on the molecular datasets \textsc{Zinc} (12K), \textsc{Zinc-Full} \citep{Dwi+2020}, \textsc{Alchemy} (12K), and \textsc{PCQM4Mv2} \citep{Hu2021}. Here, nodes represent atoms and edges bonds between atoms, and the task is always to predict one or more molecular properties of a given molecule. Due to their relatively small graphs, the above datasets are ideal for evaluating higher-order and other resource-hungry models.

On the node and edge level, we evaluate the ET on the \textsc{CLRS} benchmark for neural algorithmic reasoning \citep{velickovic+2022+clrs}. Here, the input, output, and intermediate steps of 30 classical algorithms are translated into graph data, where nodes represent the algorithm input and edges are used to encode a partial ordering of the input.
The algorithms of \textsc{CLRS} are typically grouped into eight algorithm classes: Sorting, Searching, Divide and Conquer, Greedy, Dynamic Programming, Graphs, Strings, and Geometry. The task is then to predict the output of an algorithm given its input. This prediction is made based on an encoder-processor-decoder framework introduced by \citet{velickovic+2022+clrs}, which is recursively applied to execute the algorithmic steps iteratively. We will use the ET as the processor in this framework, receiving as input the current algorithmic state in the form of node and edge features and outputting the updated node and edge features, according to the latest version of \textsc{CLRS}, available at \url{https://github.com/google-deepmind/clrs}. As such, the \textsc{CLRS} requires the ET to make both node- and edge-level predictions.

Finally, we conduct empirical expressivity tests on the \textsc{BREC} benchmark \citep{WangZhang2023BREC}. \textsc{BREC} contains 400 pairs of non-isomorphic graphs with up to 198 nodes, ranging from basic, \wlone{} distinguishable graphs to graphs even indistinguishable by \kwl{4}. In addition, \textsc{BREC} comes with its own training and evaluation pipeline. Let $f \colon \mathcal{G} \rightarrow \mathbb{R}^d$ be the model whose expressivity we want to test, where $f$ maps from a set of graphs $\mathcal{G}$ to $\mathbb{R}^d$ for some $d > 0$. Let $(G, H)$ be a pair of non-isomorphic graphs. During training, $f$ is trained to maximize the cosine distance between graph embeddings $f(G)$ and $f(H)$. During the evaluation, \textsc{BREC} decides whether $f$ can distinguish $G$ and $H$ by conducting a Hotelling's T-square test with the null hypothesis that $f$ cannot distinguish $G$ and $H$.

\paragraph{Baselines} On the molecular regression datasets, we compare the ET to an \wlone{} expressive GNN baseline such as GIN(E) \citep{Xu+2019}.

On \textsc{Zinc} (12K), \textsc{Zinc-Full} and \textsc{Alchemy}, we compare the ET to other theoretically-aligned models, most notably higher-order GNNs \citep{Bod+2021b, Morris2020b, Mor+2022b}, Graphormer-GD, with strictly less expressive power than the \kwl{3} \citep{Zha+2023}, and PPGN++, with strictly more expressive power than the \kwl{3}  \citep{Puny+2023} to study \textbf{Q1}. On \textsc{PCQM4Mv2}, we compare the ET to state-of-the-art graph transformers to study \textbf{Q2}.
To study the impact of positional/structural encodings in \textbf{Q3}, we evaluate the ET both with and without relative random walk probabilities (RRWP) positional encodings, recently proposed in \citet{ma2023GraphInductiveBiases}. RRWP encodings only apply to models with explicit representations over node pairs and are well-suited for the ET.

On the \textsc{CLRS} benchmark, we mostly compare to the Relational Transformer (RT) \citep{diao+2023+rt} as a strong graph transformer baseline. Comparing the ET to the RT allows us to study \textbf{Q2} in a different domain than molecular regression and on node- and edge-level tasks. Further, since the RT is similarly motivated as the ET in learning explicit representations of relations, we can study the potential benefits of the ET provable expressive power on the \textsc{CLRS} tasks. In addition, we compare the ET to DeepSet and GNN baselines in \citet{diao+2023+rt} and the single-task Triplet-GMPNN in \citet{ibarz+2022+generalist}.

On the \textsc{BREC} benchmark, we study questions \textbf{Q1} and \textbf{Q2} by comparing the ET to selected models presented in \citet{WangZhang2023BREC}. First, we compare to the $\delta$-$2$-LGNN \citep{Morris2020b}, a higher-order GNN with strictly more expressive power than the \wlone{}. Second, we compare to Graphormer \citep{Ying2021}, an empirically strong graph transformer. Third, we compare to PPGN \citep{Mar+2019} with the same expressive power as the ET. We additionally include the \kwl{3} results on the graphs in \textsc{BREC} to investigate how many \kwl{3} distinguishable graphs the ET can distinguish in \textsc{BREC}.

\paragraph{Experimental setup}
See \Cref{tab:hparams} for an overview of the used hyperparameters.

\begin{table}
\caption{Average test results and standard deviation for the molecular regression datasets. \textsc{Alchemy} (12K) and \textsc{Zinc-Full} over 5 random seeds, \textsc{Zinc} (12K) over 10 random seeds.}
        \label{tab:graph}
        \centering
        \resizebox{0.65\textwidth}{!}{ 	
        \begin{tabular}{lccccc}\toprule
\multirow{2}{*}{\textbf{Model}} & \textsc{Zinc} (12K) & \textsc{Alchemy} (12K) & \textsc{Zinc-Full} \\ \cmidrule{2-4}
 & MAE $\downarrow$ & MAE $\downarrow$ & MAE $\downarrow$ \\\midrule
 GIN(E) \citep{Xu+2018b, Puny+2023} & 0.163 {\tiny ±0.03} & 0.180 {\tiny ±0.006} &  0.180 \tiny ±0.006 \\
\midrule
 CIN \citep{Bod+2021b} & 0.079 \tiny ±0.006 & -- & \second{0.022 \tiny ±0.002} \\
 Graphormer-GD \citep{Zha+2023} & 0.081 \tiny ±0.009 & -- & 0.025 \tiny ±0.004\\
 SignNet \citep{Lim+2022} & 0.084 \tiny ±0.006 & 0.113 \tiny ±0.002& \third{0.024 \tiny ±0.003}\\
 BasisNet \citep{Huang+2023+Stability} & 0.155 \tiny ±0.007 & 0.110 \tiny ±0.001& --\\
 PPGN++ \citep{Puny+2023} & 0.071 \tiny ±0.001 & 0.109 \tiny ±0.001 & \first{0.020 \tiny ±0.001}\\
 SPE \citep{Huang+2023+Stability} & \third{0.069 \tiny ±0.004} & \third{0.108 \tiny ±0.001} & --\\
\midrule
ET & \second{0.062 \tiny ±0.004} & \second{0.099 \tiny ± 0.001} & 0.026 \tiny ±0.003 \\
ET{\tiny+RRWP} & \first{0.059 \tiny ±0.004} & \first{0.098 \tiny ± 0.001} & \third{0.024 \tiny ±0.003} \\
\bottomrule
\end{tabular}}
\end{table}

For \textsc{Zinc} (12K), \textsc{Zinc-Full}, and \textsc{PCQM4Mv2}, we follow the hyperparameters in \citet{ma2023GraphInductiveBiases}.
For \textsc{Alchemy}, we follow standard protocol and split the data according to \citet{Mor+2022b}. Here, we simply adopt the hyper-parameters of \textsc{Zinc} (12K) from \citet{ma2023GraphInductiveBiases} but set the batch size to 64.

We choose the same hyper-parameters as the RT for the \textsc{CLRS} benchmark. Also, following the RT, we train for 10K steps and report results over 20 random seeds.
To stay as close as possible to the experimental setup of our baselines, we integrate our Jax implementation of the ET as a processor into the latest version of the \textsc{CLRS} code base. In addition, we explore the OOD validation technique presented in \citet{JungAhn+2023+TEAM}, where we use larger graphs for the validation set to encourage size generalization. This technique can be used within the \textsc{CLRS} code base through the experiment parameters.

Finally, for \textsc{BREC}, we keep the default hyper-parameters and follow closely the setup used by \citet{WangZhang2023BREC} for PPGN. We found learning on \textsc{BREC} to be quite sensitive to architectural choices, possibly due to the small dataset sizes. As a result, we use a linear layer for the \textsf{FFN} and additionally apply layer normalization onto $\vec{X}_{il}\vec{W}^Q$, $\vec{X}_{lj}\vec{W}^K$ in \Cref{eq:edge_attention_score} and $\vec{V}_{ilj}$ in \Cref{eq:2fwl_value_fusion}. 

For \textsc{Zinc} (12K), \textsc{Zinc-Full}, \textsc{PCQM4Mv2}, \textsc{CLRS}, and \textsc{BREC}, we follow the standard train/validation/test splits. For \textsc{Alchemy}, we split the data according to the splits in \citet{Mor+2022b}, the same as our baselines.

All experiments were performed on a mix of A10, L40, and A100 NVIDIA GPUs. For each run, we used at most 8 CPU cores and 64\,GB of RAM, with the exception of \textsc{PCQM4Mv2} and \textsc{Zinc-Full}, which were trained on 4 L40 GPUs with 16 CPU cores and 256 GB RAM.

\begin{table}
    \caption{Average test micro F1 of different algorithm classes and average test score of all algorithms in CLRS over ten random seeds; see \Cref{sec:clrs_scores} for test scores per algorithm and \Cref{sec:clrs_stddev} for details on the standard deviation.
    }
    \label{tab:clrs}
            \centering
\resizebox{\textwidth}{!}{
\begin{tabular}{lcccccccc}\toprule \textbf{Algorithm} & Deep Sets \citep{diao+2023+rt} & GAT \citep{diao+2023+rt} & MPNN \citep{diao+2023+rt} & PGN \citep{diao+2023+rt} & RT \citep{diao+2023+rt} & \makecell{Triplet-\\GMPNN \citep{ibarz+2022+generalist}} & ET (ours) \\ \midrule
Sorting & \second{68.89} & 21.25 & 27.12 & 28.93 & 50.01 & \third{60.37} & \first{82.26} \\
Searching & 50.99 & 38.04 & 43.94 & \third{60.39} & \first{65.31} & 58.61 & \second{63.00} \\ 
DC & 12.29 & 15.19 & 16.14 & 51.30 & \third{66.52} & \first{76.36} & \third{64.44} \\ 
Greedy & 77.83 & 75.75 & \second{89.40} & 76.72 & \third{85.32} & \first{91.21} & 81.67 \\ 
DP & 68.29 & 63.88 & 68.81 & 71.13 & \second{83.20} & \third{81.99} & \first{83.49} \\ 
Graphs & 42.09 & 55.53 & 63.30 & 64.59 & \third{65.33} & \second{81.41} & \first{86.08} \\ 
Strings & 2.92 & 1.57 & 2.09 & 1.82 & \third{32.52} & \second{49.09} & \first{54.84} \\ 
Geometry & 65.47 & 68.94 & 83.03 & 67.78 & \third{84.55} & \first{94.09} & \second{88.22} \\ \midrule
Avg. class & 48.60 & 41.82 & 49.23 & 52.83 & \third{66.60} & \second{74.14} & \first{75.51} \\ 
All algorithms & 50.29 & 48.08 & 55.15 & 56.57 & \third{66.18} & \second{75.98} & \first{80.13} \\ 
\bottomrule
\end{tabular}}
\end{table}

\begin{table}[ht]
\caption{Number of distinguished pairs of non-isomorphic graphs on the \textsc{BREC} benchmark over 10 random seeds with standard deviation. Baseline results (over 1 random seed) are taken from \citet{WangZhang2023BREC}. For reference, we also report the number of graphs distinguishable by \kwl{3}.}
        \label{tab:brec}
        \centering
\resizebox{0.65\textwidth}{!}{ 	
        \begin{tabular}{lccccc}\toprule
\textbf{Model} & Basic & Regular & Extension & CFI & \textit{All} \\ \midrule
$\delta$-$2$-LGNN & 60 & 50 & 100 & 6 & \third{216} \\
PPGN & 60 & 50 & 100 & 23 & \second{233} \\
Graphormer & 16 & 12 & 41 & 10 & 79 \\ \midrule
ET & 60 \tiny ± 0.0 & 50  \tiny ±0.0 & 100  \tiny ±0.0 & 48.1 \tiny ±1.9 & \first{258.1 \tiny ±1.9} \\\midrule \midrule
\kwl{3} & 60 & 50 & 100 & 60 & 270 \\
\bottomrule
\end{tabular}}
\end{table}

\begin{table}
\begin{minipage}{0.5\textwidth}
\caption{Average validation MAE on the \textsc{PCQM4Mv2} benchmark over a single random seed.}
\label{tab:pcqm}
\centering
\resizebox{\textwidth}{!}{ 	
\begin{tabular}{lcc}\toprule
    \textbf{Model} & Val. MAE ($\downarrow$) & \# Params \\ \midrule
    EGT \citep{Hussain+2022} & 0.0869  & 89.3M \\
    GraphGPS{\tiny Small} \citep{rampavsek2022recipe} & 0.0938 & 6.2M \\
    GraphGPS{\tiny Medium} \citep{rampavsek2022recipe} & 0.0858 & 19.4M \\
    TokenGT{\tiny ORF} \citep{Kim+2022} & 0.0962 & 48.6M \\
    TokenGT{\tiny Lap} \citep{Kim+2022} & 0.0910 & 48.5M \\
    Graphormer \citep{Ying2021} & 0.0864 & 48.3M \\
    GRIT \citep{ma2023GraphInductiveBiases} & 0.0859 & 16.6M \\
    GPTrans-L & \first{0.0809} & 86.0M \\ \midrule
    ET & \third{0.0840} & 16.8M \\
    ET{\tiny+RRWP} & \second{0.0832} & 16.8M \\
    \bottomrule
\end{tabular}}
\end{minipage}
\begin{minipage}{0.5\textwidth}
\caption{\textsc{Zinc} (12K) leaderboard.}
\label{tab:leaderboard}
\centering
\resizebox{0.725\textwidth}{!}{
            \begin{tabular}{lcc}\toprule
\multirow{2}{*}{\textbf{Model}} & \textsc{Zinc} (12K)\\ \cmidrule{2-2}
 & MAE $\downarrow$\\\midrule
SignNet \citep{Lim+2022} & 0.084 \tiny ±0.006\\
SUN \citep{frasca2022understanding} & 0.083 \tiny ±0.003 \\
Graphormer-GD \citep{Zha+2023} & 0.081 \tiny ±0.009\\
CIN \citep{Bod+2021b} & 0.079 \tiny ±0.006\\
Graph-MLP-Mixer \citep{He+2022} & 0.073 \tiny ±0.001 \\
PPGN++ \citep{Puny+2023} & 0.071 \tiny ±0.001\\
GraphGPS \citep{rampavsek2022recipe} & 0.070 \tiny ±0.004 \\
SPE \citep{Huang+2023+Stability} & 0.069 \tiny ±0.004 \\
Graph Diffuser \citep{Glickman+2023} & 0.068 \tiny ±0.002 \\
Specformer \citep{Bo+2023} & 0.066 \tiny ±0.003 \\
GRIT \citep{ma2023GraphInductiveBiases} & \first{0.059 \tiny ±0.002} \\
\midrule
ET & \third{0.062 \tiny ±0.004}\\
ET{\tiny+RRWP} & \second{0.059 \tiny ±0.004}\\
\bottomrule
\end{tabular} 
}
\end{minipage}
\end{table}

\paragraph{Results and discussion}
In the following, we answer questions \textbf{Q1} to \textbf{Q3}. We highlight \first{first}, \second{second}, and \third{third} best results in each table. 

We compare results on the molecular regression datasets in \Cref{tab:graph}. On \textsc{Zinc} (12K) and \textsc{Alchemy}, the ET outperforms all baselines, even without using positional/structural encodings, positively answering \textbf{Q1}. Interestingly, on \textsc{Zinc-Full}, the ET, while still among the best models, does not show superior performance.
Further, the RRWP encodings we employ on the graph-level datasets improve the performance of the ET on all three datasets, positively answering \textbf{Q3}. Moreover, in \Cref{tab:leaderboard}, we compare the ET with a variety of graph learning models on \textsc{Zinc} (12K), demonstrating that the ET is highly competitive with state-of-the-art models. We observe similarly positive results in \Cref{tab:pcqm} where the ET outperforms strong graph transformer baselines such as GRIT \citep{ma2023GraphInductiveBiases}, GraphGPS \citep{rampavsek2022recipe} and Graphormer \citep{Ying2021} on \textsc{PCQM4Mv2}. As a result, we can positively answer \textbf{Q2}.

In \Cref{tab:clrs}, we compare results on \textsc{CLRS} where the ET performs best when averaging all tasks or when averaging all algorithm classes, improving over RT and Triplet-GMPNN.
Additionally, the ET performs best on 4 algorithm classes and is among the top 3 in 7/8 algorithm classes. Interestingly, only some models are best on a majority of algorithm classes.
These results indicate a benefit of the ETs' expressive power on this benchmark,
adding to the answer of \textbf{Q2}. Further, see \Cref{tab:clrs_ood} in \Cref{app:clrs_ood} for additional results using the OOD validation technique.

Finally, on the \textsc{BREC} benchmark, we observe that the ET cannot distinguish all graphs distinguishable by \kwl{3}. At the same time, the ET distinguishes more graphs than PPGN, the other \kwl{3} expressive model, providing an additional positive answer to \textbf{Q1}; see \Cref{tab:brec}.
Moreover, the ET distinguishes more graphs than $\delta$-2-LGNN and outperforms Graphormer by a large margin, again positively answering \textbf{Q2}. Overall, the positive results of the ET on \textsc{BREC} indicate that the ET is well able to leverage its expressive power empirically.

\section{Limitations}\label{sec:limitations}
While proving to be a strong and versatile graph model, the ET has an asymptotic runtime and memory complexity of $\mathcal{O}(n^3)$, which is more expensive than most state-of-the-art models with linear or quadratic runtime and memory complexity.
We emphasize that due to the runtime and memory complexity of the \kwl{k}, a trade-off between expressivity and efficiency is likely unavoidable. 
At the same time, the ET is highly parallelizable and runs efficiently on modern GPUs. We hope that innovations for parallelizable neural networks can compensate for the asymptotic runtime and memory complexity of the ET. In \Cref{fig:et_timing} in the appendix, we find that we can use low-level GPU optimizations, available for parallelizable neural networks out-of-the-box, to dampen the cubic runtime and memory scaling of the ET; see \Cref{app:runtime_and_memory} for runtime and memory experiments and an extended discussion.

\section{Conclusion}
We established a previously unknown connection between the Edge Transformer and \kwl{3}, and enabled the Edge Transformer for various graph learning tasks, including graph-, node-,~and edge-level tasks. We also utilized a well-known connection between graph isomorphism testing and first-order logic to derive a theoretical interpretation of systematic generalization.
We demonstrated empirically that the Edge Transformer is a promising architecture for graph learning, outperforming other theoretically aligned architectures and being among the best models on \textsc{Zinc} (12K), \textsc{PCQM4Mv2} and \textsc{CLRS}. Furthermore, the ET is a graph transformer that does not rely on positional/structural encodings for strong empirical performance. Future work could further explore the potential of the Edge Transformer in neural algorithmic reasoning and molecular learning by improving its scalability to larger graphs, in particular through architecture-specific low-level GPU optimizations and model parallelism.

\begin{ack}
CM and LM are partially funded by a DFG Emmy Noether grant (468502433) and RWTH Junior Principal Investigator Fellowship under Germany’s Excellence Strategy. We thank Erik Müller for crafting the figures.
\end{ack}

\bibliography{reference}

\newpage

\appendix

\section{Implementation details}
Here, we present details about implementing the ET in practice.

\subsection{Node-level readout}\label{app:readout}
In what follows, we propose a pooling method from node pairs to nodes, which allows us also to make predictions for node- and graph-level tasks. For each node $i \in V(G)$, we compute

\begin{equation*}
   \mathsf{ReadOut}(i) \coloneqq \sum_{j \in [n]} \rho_1 \Bigl( \vec{X}^{(L)}_{ij} \Bigr) + \rho_2 \Bigl( \vec{X}^{(L)}_{ji} \Bigr), 
\end{equation*}

where $\rho_1, \rho_2$ are neural networks and $\vec{X}^{(L)}$ is the node pair tensor after $L$ ET layers. We apply $\rho_1$ to node pairs where node $i$ is at the first position and $\rho_2$ to node pairs where node $i$ is at the second position. We found that making such a distinction has positive impacts on empirical performance. Then, for graph-level predictions, we first compute node-level readout as above and then use common graph-level pooling functions such as \texttt{sum} and \texttt{mean} \citep{Xu+2018b} or \texttt{set2seq} \citep{Vin+2016} on the resulting node representations. We use this readout method in our molecular regression experiments in \Cref{sec:experiments}.

\section{Experimental details}\label{app:experiments}
\begin{table*}
    \centering
     \caption{Hyperparameters of the Edge Transformer across all datasets.}
    \resizebox{\textwidth}{!}{
    \begin{tabular}{lcccccccc}\toprule
\textbf{Hyperparameter} & \textsc{Zinc}(12K) & \textsc{Alchemy} & \textsc{Zinc-Full} & \textsc{CLRS} & \textsc{BREC} & \textsc{PCQM4Mv2}\\
\midrule
Learning rate & 0.001 & 0.001 & 0.001 & 0.00025 & 0.0001 & 0.0002 \\
Grad. clip norm & 1.0 & 1.0 & 1.0 & 1.0 & -- & 5.0 \\
Batch size & 32 & 64 & 256 & 4 & 16 & 256\\
Optimizer & AdamW & Adam & AdamW & Adam & Adam & AdamW\\
\midrule
Num. layers & 10 & 10 & 10 & 3 & 5 & 10 \\
Hidden dim. & 64 & 64 & 64 & 192 & 32 & 384 \\
Num. heads & 8 & 8 & 8 & 12 & 4 & 16 \\
Activation & $\textsc{GELU}$ & $\textsc{GELU}$ & $\textsc{GELU}$ & \textsc{ReLU} & -- & \textsc{GELU} \\
Pooling & \textsc{sum} & \textsc{sum} & \textsc{sum} & -- & -- & \textsc{sum} \\
RRWP dim. & 32 & 32 & 32 & -- & -- & 128 \\
\midrule
Weight decay & 1e-5 & 1e-5 & 1e-5 & -- & 0.0001 & 0.1 \\
Dropout & 0.0 & 0.0 & 0.0 & 0.0 & 0.0 & 0.1 \\
Attention dropout & 0.2 & 0.2 & 0.2 & 0.0 & 0.0 & 0.1 \\\midrule
\# Steps & -- & -- & -- & 10K & -- & 2M \\
\# Warm-up steps & -- & -- & -- & 0 & -- & 60K \\
\# Epochs & 2K & 2K & 1K & -- & 20 & -- \\
\# Warm-up epochs & 50 & 50 & 50 & -- & 0 & -- \\
\# RRWP steps & 21 & 21 & 21 & -- & -- & 22 \\
\bottomrule
\end{tabular}
}
    \label{tab:hparams}
\end{table*}
\Cref{tab:hparams} gives an overview of selected hyper-parameters for all experiments.

See \Cref{app:clrs_ood} through \Cref{sec:clrs_stddev} for detailed results on the CLRS benchmark.
Note that in the case of CLRS, we evaluate in the single-task setting where we train a new set of parameters for each concrete algorithm, initially proposed in \textsc{CLRS}, to be able to compare against graph transformers fairly. We leave the multi-task learning proposed in \citet{ibarz+2022+generalist} for future work.

\subsection{Data source and license}\label{app:source_license}
\textsc{Zinc} (12K), \textsc{Alchemy} (12K) and \textsc{Zinc-Full} are available at \url{https://pyg.org} under an MIT license.
\textsc{PCQM4Mv2} is available at \url{https://ogb.stanford.edu/docs/lsc/pcqm4mv2/} under a CC BY 4.0 license.
The \textsc{CLRS} benchmark is available at \url{https://github.com/google-deepmind/clrs} under an Apache 2.0 license.
The \textsc{BREC} benchmark is available at \url{https://github.com/GraphPKU/BREC} under an MIT license.

\subsection{Experimental results OOD validation in CLRS}\label{app:clrs_ood}
In \Cref{tab:clrs_ood}, following \citep{JungAhn+2023+TEAM}, we present additional experimental results on \textsc{CLRS} when using graphs of size 32 in the validation set. 
We compare to both the Triplet-GMPNN \citep{ibarz+2022+generalist}, as well as the TEAM \citep{JungAhn+2023+TEAM} baselines.
In addition, in \Cref{fig:clrs_ood_improve}, we present a comparison of the improvements resulting from the OOD validation technique, comparing Triplet-GMPNN and the ET. Finally, in \Cref{tab:clrs_proc_agnostic}, we compare different modifications to the CLRS training setup that are agnostic to the choice of processor.

\begin{table}
    \caption{Average test scores for the different algorithm classes and average test scores of all algorithms in CLRS \textbf{with the OOD validation technique} over 10 seeds; see \Cref{sec:clrs_scores} for test scores per algorithm and \Cref{sec:clrs_stddev} for details on the standard deviation. Baseline results for Triplet-GMPNN and TEAM are taken from \citet{JungAhn+2023+TEAM}. Results in \%.}
    \label{tab:clrs_ood}
            \centering
\resizebox{0.7\textwidth}{!}{
\begin{tabular}{lcccccccccc}\toprule \textbf{Algorithm} & Triplet-GMPNN & TEAM & ET (ours) \\ \midrule
Sorting & 72.08 & 68.75 & \textbf{88.35} \\
Searching & 61.89 & 63.00 & \textbf{80.00}\\ 
DC & 65.70 & 69.79 & \textbf{74.70}\\ 
Greedy & 91.21 & \textbf{91.80} & 88.29\\ 
DP & \textbf{90.08} & 83.61 & 84.69\\ 
Graphs & 77.89 & 81.86 & \textbf{89.89}\\ 
Strings & 75.33 &  \textbf{81.25} & 51.22\\ 
Geometry & 88.02 &  \textbf{94.03 }& 89.68\\ \midrule
Avg. algorithm class & 77.48 & 79.23 &\textbf{ 80.91} \\ 
All algorithms & 78.00 & 79.82 & \textbf{85.01} \\ 
\bottomrule
\end{tabular}}
\end{table}

\begin{figure}
    \centering
    \includegraphics[width=0.75\linewidth]{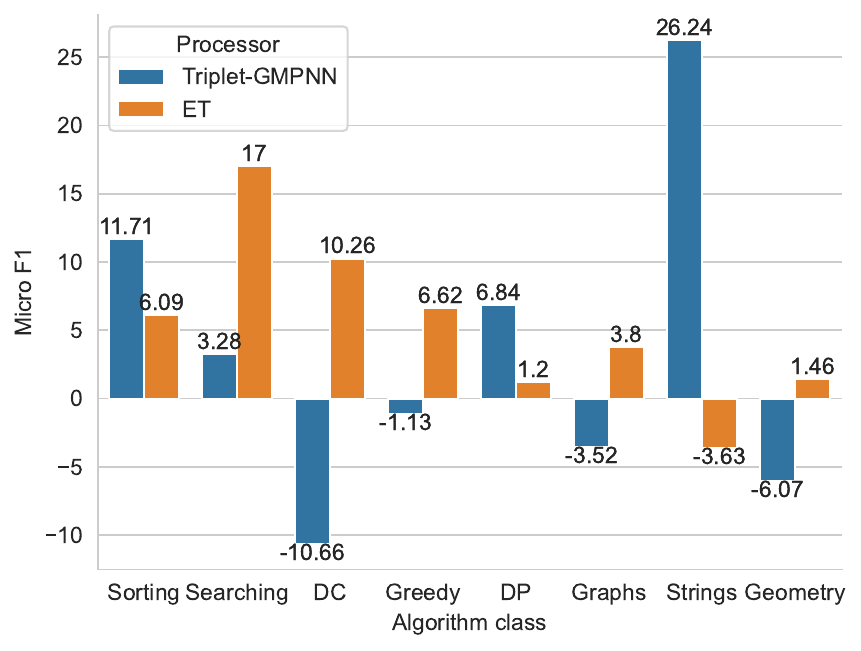}
    \caption{Difference in micro F1 with and without the OOD validation technique in \citet{JungAhn+2023+TEAM}, for Triplet-GMPNN \citep{ibarz+2022+generalist} and ET, respectively.}
    \label{fig:clrs_ood_improve}
\end{figure}

\begin{table}
    \caption{CLRS-30 Processor-agnostic modifications.}
    \label{tab:clrs_proc_agnostic}
            \centering
\resizebox{\textwidth}{!}{
\begin{tabular}{lcccccccccc}\toprule \textbf{Processor} & Markov \citep{bohde+2024+gforgetnet} & OOD Validation \citep{JungAhn+2023+TEAM} & \textit{Avg. algorithm class} & \textit{All algorithms}\\ \midrule
Triplet-GMPNN & \ding{51} & \ding{55}  & \second{79.75}  & \second{82.89} \\
Triplet-GMPNN & \ding{55} & \ding{51}  & 77.65  & 78.00 \\
TEAM & \ding{55} & \ding{51}  & \third{79.23}  & \third{79.82} \\
ET & \ding{55} & \ding{51}  & \first{80.91}  & \first{85.02} \\ 
\bottomrule 
\end{tabular}}
\end{table}

\subsection{CLRS test scores}\label{sec:clrs_scores}
We present detailed results for the algorithms in \textsc{CLRS}. See \Cref{tab:clrs_detail_dnc} for divide and conquer algorithms, \Cref{tab:clrs_detail_dp} for dynamic programming algorithms, \Cref{tab:clrs_detail_geometry} for geometry algorithms, \Cref{tab:clrs_detail_greedy} for greedy algorithms, \Cref{tab:clrs_detail_search} for search algorithms, \Cref{tab:clrs_detail_sorting} for sorting algorithms, and \Cref{tab:clrs_detail_string} for string algorithms.

\begin{table}
\centering
\caption{Detailed test scores for the ET on sorting algorithms.}
\resizebox{\textwidth}{!}{\begin{tabular}{lcccc}
\toprule
\textbf{Algorithm} & F1-score(\%) & Std. dev.(\%) & F1-score(\%)(OOD) & Std. dev.(\%) (OOD)\\
\midrule
Bubble Sort & 93.60 & 3.87 & 87.44 & 13.48 \\
Heapsort & 64.36 & 22.41 & 80.96 & 12.97 \\
Insertion Sort & 85.71 & 20.68 & 91.74 & 6.83 \\
Quicksort & 85.37 & 8.70 & 93.25 & 9.10 \\
\midrule
\textit{Average} &  82.26 & 13.92 & 88.35 & 10.58 \\
\bottomrule
\end{tabular}}
\label{tab:clrs_detail_sorting}
\end{table}

\begin{table}
\centering \caption{Detailed test scores for the ET on search algorithms.}
\resizebox{\textwidth}{!}{\begin{tabular}{lcccc}
\toprule
\textbf{Algorithm} & F1-score(\%) & Std. dev.(\%) & F1-score(\%)(OOD) & Std. dev.(\%) (OOD)\\
\midrule
Binary Search & 79.96 & 11.66 & 90.84 & 2.71 \\
Minimum & 96.88 & 1.74 & 97.94 & 0.87 \\
Quickselect & 12.43 & 11.72 & 52.64 & 22.04 \\
\midrule
\textit{Average} & 63.00 & 8.00 & 80.00 & 8.54 \\
\bottomrule
\end{tabular}}
\label{tab:clrs_detail_search}
\end{table}

\begin{table}
\centering
\caption{Detailed test scores for the ET on divide and conquer algorithms.}
\resizebox{\textwidth}{!}{\begin{tabular}{lcccc} \toprule \textbf{Algorithm} & F1-score(\%) & Std. dev.(\%) & F1-score(\%)(OOD) & Std. dev.(\%) (OOD)\\
\midrule
\small Find Max. Subarray Kadande & 64.44 & 2.24 & 74.70 & 2.59 \\
\midrule
\textit{Average} & 64.44 & 2.24 & 74.70 & 2.59 \\
\bottomrule
\end{tabular}}
\label{tab:clrs_detail_dnc}
\end{table}

\begin{table}
\centering
\caption{Detailed test scores for the ET on dynamic programming algorithms.}
\resizebox{\textwidth}{!}{\begin{tabular}{lcccc}
\toprule
\textbf{Algorithm} & F1-score(\%) & Std. dev.(\%) & F1-score(\%)(OOD) & Std. dev.(\%) (OOD)\\
\midrule
LCS Length & 88.67 & 2.05 & 88.97 & 2.06 \\
Matrix Chain Order & 90.11 & 3.28 & 90.84 & 2.94 \\
Optimal BST & 71.70 & 5.46 & 74.26 & 10.84 \\
\midrule
\textit{Average} & 83.49 & 3.60 & 84.68 & 5.28 \\
\bottomrule
\end{tabular}}
\label{tab:clrs_detail_dp}
\end{table}

\begin{table}
\centering
\caption{Detailed test scores for the ET on geometry algorithms.}
\resizebox{\textwidth}{!}{\begin{tabular}{lcccc}
\toprule
\textbf{Algorithm} & F1-score(\%) & Std. dev.(\%) & F1-score(\%)(OOD) & Std. dev.(\%) (OOD)\\
\midrule
Graham Scan & 92.23 & 2.26 & 96.09 & 0.96 \\
Jarvis March & 89.09 & 8.92 & 95.18 & 1.46 \\
Segments Intersect & 83.35 & 7.01 & 77.78 & 1.16 \\
\midrule
\textit{Average} & 88.22 & 6.09 & 89.68 & 1.19  \\
\bottomrule
\end{tabular}}
\label{tab:clrs_detail_geometry}
\end{table}

\begin{table}
\centering
\caption{Detailed test scores for the ET on graph algorithms.}
\resizebox{\textwidth}{!}{\begin{tabular}{lcccc}
\toprule
\textbf{Algorithm} & F1-score(\%) & Std. dev.(\%) & F1-score(\%)(OOD) & Std. dev.(\%) (OOD)\\
\midrule
Articulation Points & 93.06 & 0.62 & 95.47 & 2.35 \\
Bellman-Ford & 89.96 & 3.77 & 95.55 & 1.65 \\
BFS & 99.77 & 0.30 & 99.95 & 0.08 \\
Bridges & 91.95 & 10.00 & 98.28 & 2.64 \\
DAG Shortest Paths & 97.63 & 0.85 & 98,43 & 0.65 \\
DFS & 65.60 & 17.98 & 57.76 & 14.54 \\
Dijkstra & 91.90 & 2.99 & 97.32 & 7.32 \\
Floyd-Warshall & 61.53 & 5.34 & 83.57 & 1.79 \\
MST-Kruskal & 84.06 & 2.14 & 87.21 & 1.45 \\
MST-Prim & 93.02 & 2.41 & 93.00 & 1.61 \\
SCC & 65.80 & 8.13 & 74.58 & 5.31 \\
Topological Sort & 98.74 & 2.24 & 97.53 & 2.31 \\
\midrule
\textit{Average} & 86.08 & 4.73 & 89.92 & 3.02 \\
\bottomrule
\end{tabular}}
\label{tab:clrs_detail_graph}
\end{table}

\begin{table}
\centering
\caption{Detailed test scores for the ET on greedy algorithms.}
\resizebox{\textwidth}{!}{\begin{tabular}{lcccc} \toprule \textbf{Algorithm} & F1-score(\%) & Std. dev.(\%) & F1-score(\%)(OOD) & Std. dev.(\%) (OOD)\\
\midrule
Activity Selector & 80.12 & 12.34 & 91.72 & 2.35 \\
Task Scheduling & 83.21 & 0.30 & 84.85 & 2.83 \\
\midrule
\textit{Average} & 81.67 & 6.34 & 88.28 & 2.59 \\
\bottomrule
\end{tabular}}
\label{tab:clrs_detail_greedy}
\end{table}

\begin{table}
\centering
\caption{Detailed test scores for the ET on string algorithms.}
\resizebox{\textwidth}{!}{\begin{tabular}{lcccc}
\toprule
\textbf{Algorithm} & F1-score(\%) & Std. dev.(\%) & F1-score(\%)(OOD) & Std. dev.(\%) (OOD)\\
\midrule
KMP Matcher & 10.47 & 10.28 & 8.67 & 8.14 \\
Naive String Match & 99.21 & 1.10 & 93.76 & 6.28\\
\midrule
\textit{Average} & 54.84 & 5.69 & 51.21 & 7.21 \\
\bottomrule
\end{tabular}}
\label{tab:clrs_detail_string}
\end{table}

\subsection{CLRS test standard deviation}\label{sec:clrs_stddev}
We compare the standard deviation of Deep Sets, GAT, MPNN, PGN, RT, and ET following the comparison in \citet{diao+2023+rt}.
\Cref{tab:clrs_stddev_over_models} compares the standard deviation over all algorithms in the CLRS benchmark. We observe that the ET has the lowest overall standard deviation.
The table does not contain results for Triplet-GMPNN \citep{ibarz+2022+generalist} since we do not have access to the test results for each algorithm on each seed that are necessary to compute the overall standard deviation.
However, \Cref{tab:clrs_stddev_per_class} compares the standard deviation per algorithm class between Triplet-GMPNN and the ET. We observe that Triplet-GMPNN and the ET have comparable standard deviations except for search and string algorithms, where Triplet-GMPNN has a much higher standard deviation than the ET.

\begin{table}
    \centering
    \caption{Standard deviation of Deep Sets, GAT, MPNN, PGN, RT, and ET (over all algorithms and all seeds).}
    \begin{tabular}{lc}
    \toprule
       \textbf{Model}  & Std. Dev. (\%) \\\midrule
      Deep Sets & 29.3 \\
      GAT & 32.3 \\
      MPNN & 34.6 \\
      PGN & 33.1 \\
      RT & 29.6 \\
      \midrule
      ET & \textbf{26.6} \\
    \bottomrule
    \end{tabular}
    \label{tab:clrs_stddev_over_models}
\end{table}

\begin{table}
    \centering
    \caption{Standard deviation per algorithm class of Triplet-GMPNN (over 10 random seeds) as reported in \citet{ibarz+2022+generalist} and ET (over 10 random seeds). Results in \%.}
    \begin{tabular}{lcc}
    \toprule
       \textbf{Algorithm class}  & Triplet-GMPNN & ET  \\\midrule
      Sorting   & 12.16 & 15.57 \\
      Searching & 24.34 & 3.51 \\
      Divide and Conquer & 1.34& 2.46 \\
      Greedy & 2.95 & 6.54  \\
      Dynamic Programming & 4.98 & 3.60 \\
      Graphs & 6.21 & 6.79 \\
      Strings & 23.49 & 8.60 \\
      Geometry & 2.30& 3.77 
    \\\midrule
    \textit{Average} & 9.72 & \textbf{6.35} 
    \\\bottomrule
    \end{tabular}

    \label{tab:clrs_stddev_per_class}
\end{table}

\begin{figure}
    \centering
    \includegraphics[scale=0.6]{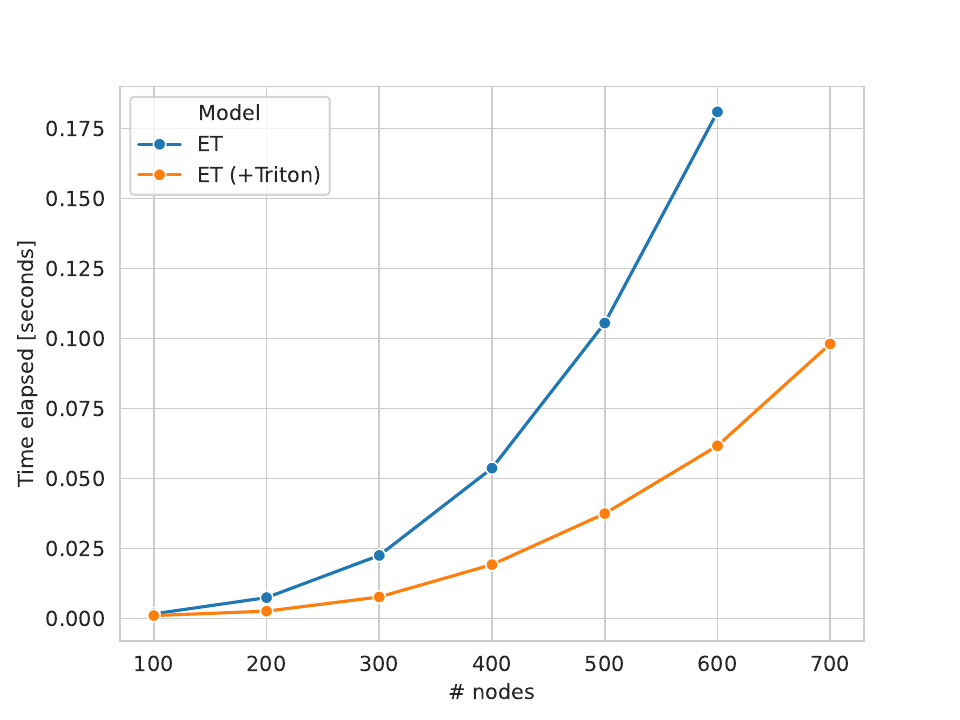}

 \caption{Runtime of the forward pass of a single ET layer in PyTorch in seconds for graphs with up to 700 nodes. We compare the runtime with and without \texttt{torch.compile} (automatic compilation into Triton \citep{Tillet+2019+Triton}) enabled. Without compilation, the ET goes out of memory after 600 nodes.}
    \label{fig:et_timing}
\end{figure}

\begin{table}
\caption{Runtime of a single run of the ET in \textsc{CLRS} on a single A100 GPU.}
\label{tab:avg_runtime_clrs}
    \centering
    \begin{tabular}{lc}
    \toprule
Algorithm & Time in hh:mm:ss \\ \midrule
Activity Selector & 00:09:38 \\
Articulation Points & 01:19:39 \\
Bellman Ford & 00:07:55 \\
BFS & 00:07:03 \\
Binary Search & 00:05:53 \\
Bridges & 01:20:44 \\
Bubble Sort & 01:05:34 \\
DAG Shortest Paths & 00:29:15 \\
DFS & 00:27:47 \\
Dijkstra & 00:09:37 \\
Find Maximum Subarray Kadane & 00:15:25 \\
Floyd Warshall & 00:12:56 \\
Graham Scan & 00:15:55 \\
Heapsort & 00:57:14 \\
Insertion Sort & 00:10:39 \\
Jarvis March & 01:34:40 \\
Kmp Matcher & 00:57:56 \\
LCS Length & 00:08:12 \\
Matrix Chain Order & 00:15:31 \\
Minimum & 00:21:25 \\
MST Kruskal & 01:15:54 \\
MST Prim & 00:09:34 \\
Naive String Matcher & 00:51:05 \\
Optimal BST & 00:12:57 \\
Quickselect & 02:25:03 \\
Quicksort & 00:59:24 \\
Segments Intersect & 00:03:38 \\
Strongly Connected Components & 00:56:58 \\
Task Scheduling & 00:08:50 \\
Topological Sort & 00:27:40 \\
\bottomrule
\end{tabular}
\end{table}

\begin{table}
    \centering
    \caption{Runtime of a single run on the molecular regression datasets, as well as \textsc{BREC}, on L40 GPUs in \textit{days:hours:minutes:seconds}.}
    \begin{tabular}{lcccccc}
    \toprule
    & \textsc{Zinc} (12K) & \textsc{Alchemy} (12K) & \textsc{Zinc-Full} & \textsc{PCQM4Mv2} & \textsc{BREC} \\
    ET & 00:06:04:52 & 00:02:47:51 & 00:23:11:05 & 03:10:35:11 & 00:00:08:37 \\
    ET+RRWP & 00:06:19:52 & 00:02:51:23 & 01:01:10:55 & 03:10:22:06 & - \\\midrule
    Num. GPUs & 1 & 1 & 4 & 4 & 1 \\
    \bottomrule
    \end{tabular}
    \label{tab:total_runtime_mol}
\end{table}

\section{Runtime and memory}\label{app:runtime_and_memory}
Here, we provide additional information on the runtime and memory requirements of the ET in practice. Specifically, in \Cref{fig:et_timing}, we provide runtime scaling of the ET with and without low-level GPU optimizations in PyTorch on an A100 GPU with \texttt{bfloat16} precision. We measure the time for the forward pass of a single layer of the ET on a single graph (batch size of 1) with $n \in \{100, 200, ..., 700\}$ nodes and average the runtime over 100 repeats. We sample random Erd\H{o}s-Renyi graphs with edge probability $0.05$. We use an embedding dimension of 64 and two attention heads.
We find that the automatic compilation into Triton \citep{Tillet+2019+Triton}, performed automatically by \texttt{torch.compile}, improves the runtime and memory scaling. Specifically, with \texttt{torch.compile} enabled, the ET layer can process graphs with up to 700 nodes and shows much more efficient runtime scaling with the number of nodes.

\paragraph{Hardware optimizations}
Efficient compilation of neural networks is already available via programming languages such as Triton \citep{Tillet+2019+Triton}. We use \texttt{torch.compile} in our molecular regression experiments.
In addition, we want to highlight \texttt{FlashAttention} \citep{Dao+2022}, available for the standard transformer, as an example of architecture-specific hardware optimizations that can reduce runtime and memory requirements.

\paragraph{Runtime per dataset/benchmark}
Here, we present additional runtime results for all of our datasets. We present the runtime of a single run on a single L40 GPU of \textsc{Zinc} (12K), \textsc{Alchemy} (12K), and \textsc{BREC}. For \textsc{Zinc-Full} and \textsc{PCQM4Mv2}, we present the runtime of a single run on 4 L40 GPUs; see \Cref{tab:total_runtime_mol}. 

On \textsc{CLRS}, the experiments in our work are run on a mix of A10 and A100 GPUs. To enable a fair comparison, we rerun each algorithm in \textsc{CLRS} in a single run on a single A100 GPU and report the corresponding runtime in \Cref{tab:avg_runtime_clrs}.
Finally, we note that these numbers only reflect the time to run the final experiments and significantly more time was used for preliminary experiments over the course of the research project.

\section{Extended preliminaries}\label{app:notation}
Here, we define our notation. Let $\Nb \coloneqq \{ 1,2,3, \dots \}$. For $n \geq 1$, let $[n] \coloneqq \{ 1, \dotsc, n \} \subset \Nb$. We use $\{\!\!\{ \dots\}\!\!\}$ to denote multisets, i.e., the generalization of sets allowing for multiple instances for each of its elements.

\paragraph{Graphs} A \new{(node-)labeled graph} $G$ is a triple $(V(G),E(G),\ell)$ with \emph{finite} sets of
\new{vertices} or \new{nodes} $V(G)$, \new{edges} $E(G) \subseteq \{ \{u,v\} \subseteq V(G) \mid u \neq v \}$ and a (node-)label function $\ell \colon V(G) \to \Nb$. Then $\ell(v)$ is a \new{label} of $v$, for $v$ in $V(G)$. If not otherwise stated, we set $n \coloneqq |V(G)|$, and the graph is of \new{order} $n$. We also call the graph $G$ an $n$-order graph. 
For ease of notation, we denote the edge $\{u,v\}$ in $E(G)$ by $(u,v)$ or $(v,u)$. We define an $n$-order \new{attributed graph} as a pair $\mathcal{G}=(G,\vec{F})$, where $G = (V(G),E(G))$ and $\vec{F}$ in $\Rb^{n\times p}$ for $p > 0$ is a \new{node feature matrix}. Here, we identify $V(G)$ with $[n]$, then $\vec{F}(v)$ in $\Rb^{1\times p}$ is the \new{feature} or \new{attribute} of the node $v \in V(G)$. Given a labeled graph $(V(G),E(G),\ell)$, a node feature matrix $\vec{F}$ is \new{consistent} with $\ell$ if $\ell(v) = \ell(w)$ for $v,w \in V(G)$ if and only if $\vec{F}(v) = \vec{F}(w)$.

\paragraph{Neighborhood and Isomorphism} The \new{neighborhood} of a vertex $v$ in $V(G)$ is denoted by $N(v) \coloneqq  \{ u \in V(G) \mid (v, u) \in E(G) \}$ and the \new{degree} of a vertex $v$ is  $|N(v)|$. Two graphs $G$ and $H$ are \new{isomorphic} and we write $G \simeq H$ if there exists a bijection $\varphi \colon V(G) \to V(H)$ preserving the adjacency relation, i.e., $(u,v)$ is in $E(G)$ if and only if $(\varphi(u),\varphi(v))$ is in $E(H)$. Then $\varphi$ is an \new{isomorphism} between $G$ and $H$. In the case of labeled graphs, we additionally require that $l(v) = l(\varphi(v))$ for $v$ in $V(G)$, and similarly for attributed graphs. Moreover, we call the equivalence classes induced by $\simeq$ \emph{isomorphism types} and denote the isomorphism type of $G$ by $\tau_G$. %
We further define the atomic type $\text{atp} \colon V(G)^k \to \Nb$, for $k > 0$, such that $\text{atp}(\vec{v}) = \text{atp}(\vec{w})$ for $\vec{v}$ and $\vec{w}$ in $V(G)^k$ if and only if the mapping $\varphi\colon V(G)^k \to V(G)^k$ where $v_i \mapsto w_i$ induces a partial isomorphism, i.e., we have $v_i = v_j \iff w_i = w_j$ and $(v_i,v_j) \in E(G) \iff (\varphi(v_i),\varphi(v_j)) \in E(G)$. 

\paragraph{Matrices} Let $\vec{M} \in \mathbb{R}^{n \times p}$ and $\vec{N} \in \mathbb{R}^{n \times q}$ be two matrices then  $\begin{bmatrix} \vec{M} & \vec{N} \end{bmatrix}  \in \mathbb{R}^{n \times (p+q)}$ denotes column-wise matrix concatenation.  We also write $\Rb^d$ for $\Rb^{1\times d}$. Further, let $\vec{M} \in \mathbb{R}^{p \times n}$ and $\vec{N} \in \mathbb{R}^{q \times n}$ be two matrices then
\begin{equation*}
    \begin{bmatrix} \vec{M} \\ \vec{N} \end{bmatrix} \in \mathbb{R}^{(p+q)\times n}
\end{equation*}
denotes row-wise matrix concatenation. 

For a matrix $\vec{X} \in \mathbb{R}^{n \times d}$, we denote with $\vec{X}_i$ the $i$th row vector. In the case where the rows of $\vec{X}$ correspond to nodes in a graph $G$, we use $\vec{X}_v$ to denote the row vector corresponding to the node $v \in V(G)$.

\paragraph{The Weisfeiler--Leman algorithm}\label{vr_ext} 
We describe the Weisfeiler--Leman algorithm, starting with the \wlone. The \wlone{} or color refinement is a well-studied heuristic for the graph isomorphism problem, originally proposed by~\citet{Wei+1968}.\footnote{Strictly speaking, the \wlone{} and color refinement are two different algorithms. The \wlone{} considers neighbors and non-neighbors to update the coloring, resulting in a slightly higher expressive power when distinguishing vertices in a given graph; see~\cite {Gro+2021} for details. For brevity, we consider both algorithms to be equivalent.} 
Intuitively, the algorithm determines if two graphs are non-isomorphic by iteratively coloring or labeling vertices. Formally, let $G = (V,E,\ell)$ be a labeled graph, in each iteration, $t > 0$, the \wlone{} computes a vertex coloring $C^1_t \colon V(G) \to \Nb$, depending on the coloring of the neighbors. That is, in iteration $t>0$, we set
\begin{equation*}
	C^1_t(v) \coloneqq \REL\Big(\!\big(C^1_{t-1}(v),\oms C^1_{t-1}(u) \mid u \in N(v)  \cms \big)\! \Big),
\end{equation*}
for all vertices $v$ in $V(G)$,
where $\REL$ injectively maps the above pair to a unique natural number, which has not been used in previous iterations. In iteration $0$, the coloring $C^1_{0}\coloneqq \ell$. To test if two graphs $G$ and $H$ are non-isomorphic, we run the above algorithm in ``parallel'' on both graphs. If the two graphs have a different number of vertices colored $c$ in $\Nb$ at some iteration, the \wlone{} \new{distinguishes} the graphs as non-isomorphic. 
It is easy to see that \wlone{} cannot distinguish all non-isomorphic graphs~\citep{Cai+1992}. 

\paragraph{The \texorpdfstring{$k$}{k}-dimensional Weisfeiler--Leman algorithm}\label{kwl_intro} Due to the shortcomings of the $\wlone$ or color refinement in distinguishing non-isomorphic graphs, several researchers, e.g.,~\citet{Bab1979,Cai+1992}, devised a more powerful generalization of the former, today known
as the $k$-dimensional Weisfeiler-Leman algorithm (\kwl{k}), operating on $k$-tuples of nodes rather than single nodes. 

Intuitively, to surpass the limitations of the \wlone, the \kwl{k} colors node-ordered $k$-tuples instead of a single node. More precisely, given a graph $G$, the \kwl{k} colors the tuples from $V(G)^k$ for $k \geq 2$ instead of the nodes. By defining a neighborhood between these tuples, we can define a coloring similar to the \wlone. Formally, let $G$ be a graph, and let $k \geq 2$. In each iteration, $t \geq 0$, the algorithm, similarly to the \wlone, computes a
\new{coloring} $C^k_t \colon V(G)^k \to \Nb$. In the first iteration, $t=0$, the tuples $\vec{v}$ and $\vec{w}$ in $V(G)^k$ get the same
color if they have the same atomic type, i.e.,
$\text{atp}_k(\vec{v})=\text{atp}_k(\vec{u})$. %
Then, for each iteration, $t > 0$, $C^k_{t}$ is defined by
\begin{equation}\label{vr_ext_app}
	C^k_{t}(\vec{v}) \coloneqq \REL \big(C^k_{t-1}(\vec{v}), M_t(\vec{v}) \big),
\end{equation}
with $M_t(\vec{v})$ the multiset
\begin{equation}\label{mi}
	M_t(\vec{v}) \coloneqq  \big( \{\!\! \{  C^{k}_{t-1}(\phi_1(\vec{v},w)) \mid w \in V(G) \} \!\!\}, \dots, \{\!\! \{  C^{k}_{t-1}(\phi_k(\vec{v},w)) \mid w \in V(G) \} \!\!\} \big),
\end{equation}
and where
\begin{equation*}
	\phi_j(\vec{v},w)\coloneqq (v_1, \dots, v_{j-1}, w, v_{j+1}, \dots, v_k).
\end{equation*}
That is, $\phi_j(\vec{v},w)$ replaces the $j$-th component of the tuple $\vec{v}$ with the node $w$. Hence, two tuples are \new{adjacent} or \new{$j$-neighbors} if they are different in the $j$th component (or equal, in the case of self-loops). Hence, two tuples $\vec{v}$ and $\vec{w}$ with the same color in iteration $(t-1)$ get different colors in iteration $t$ if there exists a $j$ in $[k]$ such that the number of $j$-neighbors of $\vec{v}$ and $\vec{w}$, respectively, colored with a certain color is different.

We run the \kwl{k} algorithm until convergence, i.e., until for $t$ in $\Nb$
\begin{equation*}
	C^k_{t}(\vec{v}) = C^k_{t}(\vec{w}) \iff C^k_{t+1}(\vec{v}) = C^k_{t+1}(\vec{w}),
\end{equation*}
for all $\vec{v}$ and $\vec{w}$ in $V(G)^k$ holds. %

Similarly to the \wlone, to test whether two graphs $G$ and $H$ are non-isomorphic, we run the \kwl{k} in ``parallel'' on both graphs. Then, if the two graphs have a different number of nodes colored $c$, for $c$ in $\Nb$, the \kwl{k} \textit{distinguishes} the graphs as non-isomorphic. By increasing $k$, the algorithm gets more powerful in distinguishing non-isomorphic graphs, i.e., for each $k \geq 2$, there are non-isomorphic graphs distinguished by $(k+1)$\text{-}\textsf{WL} but not by \kwl{k}~\citep{Cai+1992}.

\paragraph{The folklore \texorpdfstring{$k$}{k}-dimensional Weisfeiler--Leman algorithm}\label{app:fkwl_intro} A common and well-studied variant of the \kwl{k} is the \fkwl{k}, which differs from the \kwl{k} only in the aggregation function. Instead of~\cref{mi}, the ``folklore'' version of the \kwl{k} updates $k$-tuples according to
\begin{equation*}\label{eq:folklore_wl}
    M^{\text{F}}_t(\vec{v}) \coloneqq  \{\!\! \{ (  C^{k, \text{F}}_{t-1}(\phi_1(\vec{v},w)), ..., C^{k, \text{F}}_{t-1}(\phi_k(\vec{v},w)) ) \mid w \in V(G) \} \!\!\},
\end{equation*}
resulting in the coloring $C^{k,\text{F}}_t \colon V(G)^k \to \Nb$, and is strictly more powerful than the \kwl{k}. Specifically, for $k \geq 2$, the \kwl{k} is exactly as powerful as the \fkwl{(k-1)}~\citep{Gro+2021}.

\paragraph{Computing \kwl{k}'s initial colors} Let $G = (V(G), E(G), \ell)$ be a labeled graph, $k \geq 2$, and let $\vec{v} := (v_1, \dots, v_k) \in V(G)^k$ be a $k$-tuple. Then, we can present the atomic type $\text{atp}(\vec{v})$ by a $k \times k$ matrix $K$ over $\{ 1,2,3\}$. That is, the entry $K_{ij}$ is 1 if $(v_i,v_j) \in E(G)$, 2 if $v_i = v_j$, and 3 otherwise. Further, we ensure consistency with $\ell$, meaning that for two $k$-tuples $\vec{v} := (v_1, \dots, v_k)\in V(G)^k$ and $\vec{w} := (w_1, \dots, w_k)\in V(G)^k$, then
\begin{equation*}
    C^k_0(\mathbf{v}) = C^k_0(\mathbf{w}),
\end{equation*}
if and only if, $\text{atp}(\mathbf{v}) = \text{atp}(\mathbf{w})$ and $\ell(v_i) = \ell(w_i)$, for all $i \in [k]$. Note that we compute the initial colors for both \kwl{k} and the \fkwl{k} in this way.

\subsection{Relationship between first-order logic and Weisfeiler--Leman}
We begin with a short review of \citet{Cai+1992}. 
We consider our usual node-labeled graph $G = (V(G), E(G), \ell)$ with $n$ nodes. However, we replace $\ell$ with a countable set of color relations $C_1, \dots, C_n$, where for a node $v \in V(G)$,
\begin{equation*}
    C_i(v) \Longleftrightarrow \ell(v) = i.
\end{equation*}
Note that \citet{Cai+1992} consider the more general case where nodes can be assigned to multiple colors simultaneously. However, for our work, we assume that a node is assigned to precisely one color, and hence, the set of color relations is at most of size $n$. We can construct first-order logic statements about $G$. For example, the following sentence describes the existence of a triangle formed by two nodes with color $1$:
\begin{equation*}
    \exists x_1 \exists x_2 \exists x_3 \big(E(x_1, x_2) \wedge E(x_1, x_3) \wedge E(x_2, x_3) \wedge C_1(x_1) \wedge C_1(x_2)\big).
\end{equation*}
Here, $x_1$, $x_2$, and $x_3$ are \new{variables} which can be repeated and re-quantified at will.
Statements made about $G$ and a subset of nodes in $V(G)$ are of particular importance to us. To this end, we define a \new{$k$-configuration}, a function $f: \{x_1, \dots, x_k\} \rightarrow V(G)$ that assigns a node in $V(G)$ to each one of the variables $x_1,\ldots,x_k$. Let $\varphi$ be a first-order formula with free variables among $x_1,\dots,x_k$. %
Then, we write
\begin{equation*}
    G, f \models \varphi
\end{equation*}
if $\varphi$ is true when the variable $x_i$ is interpreted as the node $f(x_i)$, for $i=1,\ldots,k$.

\citet{Cai+1992} define the language $\mathcal{C}_{k,m}$ of all first-order formulas with counting quantifiers, at most $k$ variables, and quantifier depth bounded by $m$, and the language $\mathcal{C}_k = \bigcup_{m\geq 0}\mathcal{C}_{k,m}$.
For example, the sentence $\forall x \exists ! 3 y\big( E(x,y) \big)$
in $\mathcal{C}_2$ describes 3-regular graphs; i.e., graphs where each vertex has exactly 3 neighbors.

We define the  equivalence relation $\equiv_{k,m}$ over pairs $(G,f)$ made of graphs $G$ and $k$-configurations $f$ as $(G,f) \equiv_{k,m} (H,g)$ if and only if
\begin{equation*}
G,f \models \varphi \iff H,g \models \varphi    
\end{equation*}
for all formulas $\varphi$ in $\mathcal{C}_{k,m}$ whose free variables are among $x_1,\ldots,x_k$.

We can now formulate a main result of \citet{Cai+1992}. Let $G$ and $H$ be two graphs, let $k\geq 1$ and $m\geq 0$ be non-negative integers, and let $f$ and $g$ be $k$-configurations for $G$ and $H$ respectively.
If $\vec{u}=(f(x_1),\ldots,f(x_k)) \in V(G)^k$ and $\vec{v}=(g(x_1),\ldots,g(x_k)) \in V(H)^k$, then
\begin{equation*}
    C^{k,F}_m(\vec{u}) = C^{k,F}_m(\vec{v}) \iff (G,f) \equiv_{k,m} (H,g) \,.
\end{equation*}

\section{Proofs}\label{app:proof}
Here, we first generalize the GNN from \citet{Gro+2021} to the \fkwl{2}. Higher-order GNNs with the same expressivity have been proposed in prior works by \citet{Azi+2020}. However, our GNNs have a special form that can be computed by the Edge Transformer.

Formally, let $S \subseteq \mathbb{N}$ be a finite subset.
First, we show that multisets over $S$ can be injectively mapped to a value in the closed interval $(0,1)$, a variant of Lemma VIII.5 in~\citet{Gro+2021}. Here, we outline a streamlined version of its proof, highlighting the key intuition behind representing multisets as $m$-ary numbers. 
Let $M \subseteq S$ be a multiset with multiplicities $a_1, \dots, a_k$ and distinct $k$ values. We define the \textit{order} of the multiset as $\sum_{i=1}^k a_i$.
We can write such a multiset as
a sequence $x^{(1)}, \dots, x^{(l)}$ where $l$ is the order of the multiset.
Note that the order of the sequence is arbitrary and that for $i \neq j$ it is possible to have $x^{(i)} = x^{(j)}$. We call such a sequence an $M$-sequence of length $l$.
We now prove a slight variation of a result of~\citet{Gro+2021}.

\begin{lemma}\label{lemma:multiset_m_ary} 
For a finite $m \in \mathbb{N}$, let $M \subseteq S$ be a multiset of order $m - 1$ and let $x_i \in S$ denote the $i$th number in a fixed but arbitrary ordering of $S$.
Given a mapping $g \colon S \rightarrow (0,1)$ where
\begin{equation*}
    g(x_i) \coloneqq m^{-i},
\end{equation*}
and an $M$-sequence of length $l$ given by $x^{(1)}, \dots, x^{(l)}$ with positions $i^{(1)}, \dots, i^{(l)}$ in $S$, the sum
\begin{equation*}
 \sum_{j \in [l]} g(x^{(j)}) = \sum_{j \in [l]} m^{-i^{(j)}}
\end{equation*}
is unique for every unique $M$.
\end{lemma}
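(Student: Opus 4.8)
The plan is to translate the sum into the language of digit expansions: reindex it by the multiplicities of $M$, observe that the order bound forces these multiplicities to be legal base-$m$ digits, and then conclude that the resulting number in $[0,1)$ determines its digits — hence $M$ — uniquely.

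First I would set $c_i \ge 0$ to be the multiplicity of $x_i$ in $M$, so that $M$ is completely determined by the tuple $(c_i)_{i \le |S|}$ and $\sum_i c_i = l = m-1$; in particular $0 \le c_i \le m-1$ for every $i$. Since summing $g$ over an $M$-sequence counts each $x_i$ exactly $c_i$ times, no matter in which order the sequence is written,
\begin{equation*}
\sum_{j \in [l]} g(x^{(j)}) \;=\; \sum_{j \in [l]} m^{-i^{(j)}} \;=\; \sum_{i} c_i\, m^{-i}.
\end{equation*}
This already shows the value is well defined as a function of $M$, and displays it as the real number whose base-$m$ expansion is $0.c_1 c_2 c_3\dots$.

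Next I would show this map is injective on multisets of order $m-1$. Suppose $M, M'$ have multiplicity tuples $(c_i), (c_i')$ with $\sum_i c_i m^{-i} = \sum_i c_i' m^{-i}$, and suppose $M \neq M'$. Let $i_0$ be the smallest index with $c_{i_0} \neq c_{i_0'}$; by symmetry assume $c_{i_0} > c_{i_0'}$. Cancelling the (equal) terms with $i < i_0$ and multiplying through by $m^{i_0}$ gives
\begin{equation*}
1 \;\le\; c_{i_0} - c_{i_0'} \;=\; \sum_{i > i_0} (c_i' - c_i)\, m^{i_0 - i} \;\le\; \sum_{i > i_0} c_i'\, m^{i_0 - i} \;\le\; \frac{1}{m}\sum_{i > i_0} c_i' \;\le\; \frac{m-1}{m} \;<\; 1,
\end{equation*}
a contradiction. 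Hence $M = M'$, which is exactly the claimed uniqueness.

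I do not expect a genuine obstacle here; the statement is morally ``terminating base-$m$ expansions are unique,'' and the one point to be careful about is precisely why the classical ambiguity of $m$-adic rationals (a tail of digits all equal to $m-1$) cannot occur — namely because $\sum_i c_i \le m-1 < m$ caps each digit and, with $S$ finite, leaves only finitely many nonzero. I would spell this bound out explicitly rather than cite base-$m$ uniqueness as a black box, since later parts of the proof of \Cref{theorem:k_fwl} reuse the same bookkeeping (composing such encodings while keeping the base large enough relative to the order of the relevant multisets).
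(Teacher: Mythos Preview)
Your proof is correct and takes essentially the same approach as the paper: reindex the sum by multiplicities, read it as the base-$m$ expansion $0.c_1 c_2 \cdots$, and use the order bound $\sum_i c_i = m-1$ to force each digit below $m$. The only difference is that you spell out the injectivity step via a first-disagreement argument, whereas the paper simply asserts uniqueness once the digits are seen to lie in $\{0,\dots,m-1\}$; your version is slightly more self-contained but not a different route.
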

\begin{proof}
By assumption, let $M \subseteq S$ denote a multiset of order $m - 1$. Further, let $x^{(1)}, \dots, x^{(l)} \in M$ be an $M$-sequence with $i^{(1)}, \dots, i^{(l)}$ in $S$. Given our fixed ordering of the numbers in $S$ we can equivalently write $M = ( (a_1, x_1), \dots, (a_n, x_n) )$, where $a_i$ denotes the multiplicity of $i$th number in $M$ with position $i$ from our ordering over $S$. Note that for a number $m^{-i}$ there exists a corresponding $m$-ary number written as
\begin{equation*}
    0.0 \ldots \underbrace{1}_{i} \ldots
\end{equation*}
Then the sum,
\begin{align*}
    \sum_{j \in [l]} g(x^{(j)}) &= \sum_{j \in [l]} m^{-i^{(j)}} \\
    &= \sum_{i \in S} a_im^{-i} \in (0,1)
\end{align*}
and in $m$-ary representation
\begin{align*}
   0.a_1 \ldots a_n. 
\end{align*}
Note that $a_i = 0$ if and only if there exists no $j$ such that $i^{(j)} = i$. Since the order of $M$ is $m - 1$, it holds that $a_i < m$. Hence, it follows that the above sum is unique for each unique multiset $M$, implying the result.
\end{proof}

Recall that $S \subseteq \mathbb{N}$ and that we fixed an arbitrary ordering over $S$. Intuitively, we use the finiteness of $S$ to map each number therein to a fixed digit of the numbers in $(0,1)$. The finite $m$ ensures that at each digit, we have sufficient ``bandwidth'' to encode each $a_i$. Now that we have seen how to encode multisets over $S$ as numbers in $(0,1)$, we review some fundamental operations about the $m$-ary numbers defined above. We will refer to decimal numbers $m^{-i}$ as \textit{corresponding} to an $m$-ary number
\begin{equation*}
    0.0 \ldots \underbrace{1}_{i} \ldots,
\end{equation*}
where the $i$th digit after the decimal point is $1$ and all other digits are $0$, and vice versa.

To begin with, addition between decimal numbers implements \textit{counting} in $m$-ary notation, i.e., 
\begin{equation*}
    m^{-i} + m^{-j} \text{ corresponds to } 0.0\ldots \underbrace{1}_{i} \ldots \underbrace{1}_{j} \ldots,
\end{equation*}
for digit positions $i \neq j$ and
\begin{equation*}
    m^{-i} + m^{-j} \text{ corresponds to } 0.0\ldots \underbrace{2}_{i=j}\ldots,
\end{equation*}
otherwise.
We used counting in the previous result's proof to represent a multiset's multiplicities. Next, multiplication between decimal numbers implements \textit{shifting} in $m$-ary notation, i.e.,
\begin{equation*}
    m^{-i} \cdot m^{-j} \text{ corresponds to } 0.0\ldots \underbrace{1}_{i+j}\ldots.
\end{equation*}
Shifting further applies to general decimal numbers in $(0,1)$. Let $x \in (0,1)$ correspond to an $m$-ary number with $l$ digits,
\begin{equation*}
   0.a_1 \ldots a_l.
\end{equation*}
Then,
\begin{equation*}
   m^{-i} \cdot x \text{ corresponds to } 0.0\ldots 0\underbrace{a_1 \ldots a_l}_{i+1, \dots, i+l}.
\end{equation*}

Before we continue, we show a small lemma stating that two non-overlapping sets of $m$-ary numbers preserve their uniqueness under addition.
\begin{lemma}\label{lemma:non_overlapping_m-ary_sets}
Let $A$ and $B$ be two sets of $m$-ary numbers for some $m > 1$. If
\begin{equation*}
    \min_{x \in A} x > \max_{y \in B} y,
\end{equation*}
then for any $x_1, x_2 \in A, y_1, y_2 \in B$,
\begin{equation*}
    x_1 + y_1 = x_2 + y_2 \Longleftrightarrow x_1 = x_2 \text{ and } y_1 = y_2.
\end{equation*}
\end{lemma}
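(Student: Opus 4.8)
The plan is to establish \Cref{lemma:non_overlapping_m-ary_sets} by a direct ordering argument, exploiting the fact that the two sets $A$ and $B$ are separated by a gap. Write $a \coloneqq \min_{x \in A} x$ and $b \coloneqq \max_{y \in B} y$, so by hypothesis $a > b$. The ``$\Leftarrow$'' direction is immediate. For ``$\Rightarrow$'', suppose $x_1 + y_1 = x_2 + y_2$ with $x_1, x_2 \in A$ and $y_1, y_2 \in B$; I want to conclude $x_1 = x_2$ and $y_1 = y_2$. It suffices to show $x_1 = x_2$, since then $y_1 = y_2$ follows by cancellation. So assume for contradiction that $x_1 \neq x_2$, and without loss of generality $x_1 > x_2$. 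Then $x_1 - x_2 = y_2 - y_1$.

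The key quantitative point is that $A$ and $B$ live in disjoint ranges: every element of $A$ is at least $a$ and every element of $B$ is at most $b < a \le 1$ (recall all these are $m$-ary numbers in $(0,1)$ built as in \Cref{lemma:multiset_m_ary}). I would argue that the left-hand side $x_1 - x_2$ is ``too large'' to be matched by $y_2 - y_1$. Concretely, $x_1 - x_2 \ge$ (the smallest positive difference realizable between two elements of $A$), while $y_2 - y_1 \le b - 0 = b < a$. The cleanest route is to observe that because $A$ consists of $m$-ary numbers whose nonzero digits sit in the ``high'' positions (positions before those used by $B$), any nonzero difference $x_1 - x_2$ has magnitude at least the place value of the last digit position used by $A$, which is still strictly larger than $b$, the largest possible value of any element of $B$ — hence strictly larger than $y_2 - y_1 \le b$. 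This contradiction forces $x_1 = x_2$.

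Alternatively — and this is probably the slicker phrasing to put in the paper — I would note that $x_1 + y_1 \ge a + 0 = a$ and, since $x_2 \in A$ with $x_2 \le 1$ and $y_2 \le b$, the equation $x_1 + y_1 = x_2 + y_2$ together with $x_1, x_2$ sharing the same ``digit support'' above that of $B$ means the digits of $x_1+y_1$ and $x_2+y_2$ in the high positions are exactly those of $x_1$ and $x_2$ respectively (no carrying from the $B$-part, since $y_1, y_2 < a \le m^{-(\text{last high position})}$ roughly), and the digits in the low positions are exactly those of $y_1$ and $y_2$. Reading off the unique $m$-ary representation of the common value on each side then yields $x_1 = x_2$ and $y_1 = y_2$ simultaneously. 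This uses the same no-carry/unique-representation reasoning already invoked for \Cref{lemma:multiset_m_ary}.

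The main obstacle is making the ``no carrying between the $A$-block and the $B$-block'' claim fully rigorous: one must be careful that $\max_{y\in B} y$ being strictly below $\min_{x \in A} x$ really does guarantee the digit supports are separated (or at least that sums within $B$ cannot carry up into the positions that distinguish elements of $A$). In the intended application the sets $A$ and $B$ will be constructed so that this holds by design — $A$'s values are bounded below by a fixed power $m^{-r}$ and $B$'s values are bounded above by something smaller than $m^{-r}$ — so I would either (i) state the lemma exactly as above and give the short ordering-plus-contradiction argument, being explicit that $x_1 - x_2$, if nonzero, is at least $a - (\text{something} \le 1)$... no: rather, I would lean on the fact that $x_1 - x_2 = y_2 - y_1 \le b < a \le x_1$, and combine with a lower bound on nonzero differences in $A$, or (ii) strengthen the hypothesis slightly to the separation-of-supports form actually used downstream. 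I expect option (i) with the clean contradiction $0 < x_1 - x_2 = y_2 - y_1 \le b - 0 < a \le$ (minimal nonzero gap in $A$) to be the cleanest, modulo spelling out that minimal gap; if that gap bound is awkward, fall back to the unique-$m$-ary-representation argument of the previous paragraph.
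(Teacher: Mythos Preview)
Your second approach --- reading off the $m$-ary digits of $x+y$ in a ``high block'' coming from $A$ and a ``low block'' coming from $B$, with no carry between them --- is exactly what the paper does. The paper simply asserts that $\min_{x\in A} x > \max_{y\in B} y$ forces the digit ranges of $A$ and $B$ to be disjoint, writes $x = 0.x_1\cdots x_l$ and $y = 0.\underbrace{0\cdots 0}_{l}\,y_1\cdots y_k$, and concludes that $x+y = 0.x_1\cdots x_l\,y_1\cdots y_k$ determines $(x,y)$ uniquely.

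Your hesitation about that step is justified, and in fact you are being more careful than the paper. The bare hypothesis $\min A > \max B$ does \emph{not} force separated digit supports for arbitrary finite subsets of $(0,1)$: in base $10$, take $A=\{0.5,\,0.6\}$ and $B=\{0.1,\,0.2\}$; then $\min A = 0.5 > 0.2 = \max B$, yet $0.5+0.2 = 0.6+0.1$. What actually makes the argument work is the extra structure present at the one place the lemma is invoked (in the proof of \Cref{theorem:k_fwl}): there the elements of $A$ are powers $n^{-i}$ with $i\le n^2$, while every element of $B$ carries an explicit factor of $n^{-n^2}$, so the digit blocks are disjoint by construction. Your option~(ii) --- strengthening the hypothesis to a genuine separation-of-supports condition --- is the clean fix. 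Your option~(i) runs into the same obstruction: without support separation there is no lower bound on nonzero differences in $A$ that dominates $\max B$, as the same counterexample shows.
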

\begin{proof}
The statement follows from the fact that if 
\begin{equation*}
    \min_{x \in A} x > \max_{y \in B} y,
\end{equation*}
then numbers in $A$ and numbers in $B$ do not overlap in terms of their digit range. Specifically, there exists some $l > 0$ such that we can write
\begin{align*}
    x &\coloneqq 0.x_1 \dots x_l \\
    y &\coloneqq 0.\underbrace{0 \dots 0}_l y_1 \dots y_k,
\end{align*}
for some $k > l$ and all $x \in A$, $y \in B$. As a result,
\begin{equation*}
    x + y = 0.x_1 \dots x_l y_1 \dots y_k.
\end{equation*}
Hence, $x + y$ is unique for every unique pair $(x,y)$. This completes the proof.
\end{proof}

We begin by showing the following proposition, showing that the tokenization in \Cref{eq:higher_order_tokens} is sufficient to encode the initial node colors under \fkwl{2}.
\begin{proposition}\label{prop:higher_order_tokens}
Let $G = (V(G), E(G), \ell)$ be a node-labeled graph with $n$ nodes.
Then, there exists a parameterization of~\Cref{eq:higher_order_tokens} with $d = 1$ such that for each $2$-tuples $\vec{u}, \vec{v} \in V(G)^2$,
\begin{equation*}
    C^{2, \text{F}}_0(\vec{u}) = C^{2, \text{F}}_0(\vec{v}) \Longleftrightarrow \vec{X}(\vec{u}) = \vec{X}(\vec{v}).
\end{equation*}
\end{proposition}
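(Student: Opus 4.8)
The plan is to observe that the argument of $\phi$ in \Cref{eq:higher_order_tokens}, the concatenation $\begin{bmatrix}\vec{E}_{ij} & \vec{F}_i & \vec{F}_j\end{bmatrix}$, already carries exactly the information on which $C^{2,\text{F}}_0$ depends, and then to pick $\phi$ so that it separates the finitely many distinct such arguments. First I would recall (see \Cref{app:notation}) that the initial $2$-FWL color $C^{2,\text{F}}_0(i,j)$ is an injective function of the triple $(\ell(i),\ell(j),\text{atp}(i,j))$, and that for a $2$-tuple the atomic type $\text{atp}(i,j)$ is determined by whether $i=j$ and whether $(i,j)\in E(G)$, which are mutually exclusive since $G$ has no self-loops. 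Hence two $2$-tuples $\vec{u}=(i,j)$ and $\vec{v}=(i',j')$ satisfy $C^{2,\text{F}}_0(\vec{u})=C^{2,\text{F}}_0(\vec{v})$ if and only if $\ell(i)=\ell(i')$, $\ell(j)=\ell(j')$, and $\text{atp}(i,j)=\text{atp}(i',j')$.

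Next I would fix the parameterization. In the tokenization of \Cref{sec:et} the vectors $\vec{x}_1,\vec{x}_2\in\mathbb{R}^q$ are free, so choose them pairwise distinct and each distinct from $\vec{0}$. Then $\vec{E}_{ij}=\vec{x}_2$ exactly when $i=j$, $\vec{E}_{ij}=\vec{x}_1$ exactly when $(i,j)\in E(G)$, and $\vec{E}_{ij}=\vec{0}$ otherwise, so $\vec{E}_{ij}=\vec{E}_{i'j'}\iff\text{atp}(i,j)=\text{atp}(i',j')$. Since $\vec{F}$ is consistent with $\ell$, also $\vec{F}_i=\vec{F}_{i'}\iff\ell(i)=\ell(i')$, and analogously for the $j$-component. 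Combining these three equivalences, the map $(i,j)\mapsto\begin{bmatrix}\vec{E}_{ij} & \vec{F}_i & \vec{F}_j\end{bmatrix}\in\mathbb{R}^{2p+q}$ induces on $V(G)^2$ exactly the same partition as $C^{2,\text{F}}_0$.

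It then remains to choose $\phi\colon\mathbb{R}^{2p+q}\to\mathbb{R}$ injective on the finite image set $\{\begin{bmatrix}\vec{E}_{ij} & \vec{F}_i & \vec{F}_j\end{bmatrix}\mid(i,j)\in V(G)^2\}$; composing, $\vec{X}(\vec{u})=\vec{X}(\vec{v})$ iff the two concatenations coincide iff $C^{2,\text{F}}_0(\vec{u})=C^{2,\text{F}}_0(\vec{v})$, which is the claim with $d=1$. Such a $\phi$ exists by a standard argument: a generic linear functional separates any finite set of distinct points (the functionals that fail lie in a finite union of hyperplanes), and this functional is realizable by a one-hidden-layer network, which can alternatively be made to interpolate distinct prescribed values on the finite set. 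I do not expect a genuine obstacle here; the only points needing care are the ``$\Longleftarrow$'' direction — ensuring the tokenization does not merge $2$-tuples of different atomic type, which is precisely why $\vec{x}_1,\vec{x}_2,\vec{0}$ must be pairwise distinct — and, symmetrically, that $\phi$ does not merge distinct inputs, which is handled by the injectivity argument just given.
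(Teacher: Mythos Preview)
Your proposal is correct and follows essentially the same approach as the paper: both argue that the concatenation $\begin{bmatrix}\vec{E}_{ij} & \vec{F}_i & \vec{F}_j\end{bmatrix}$ injectively encodes the atomic type together with the node labels (hence the initial $2$-FWL color), and then invoke the finiteness of the image to obtain an injective $\phi$ into $\mathbb{R}$. Your write-up is somewhat more careful than the paper's --- you make explicit the need for $\vec{x}_1,\vec{x}_2,\vec{0}$ to be pairwise distinct and you spell out the generic-linear-functional construction of $\phi$ --- but the argument is the same.
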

\begin{proof}
The statement directly follows from the fact that the initial color of a tuple $\vec{u} \coloneqq (i,j)$ depends on the atomic type and the node labeling. In \Cref{eq:higher_order_tokens}, we encode the atomic type with $\vec{E}_{ij}$ and the node labels with
\begin{equation*}
    \begin{bmatrix}
        \vec{E}_{ij} & \vec{F}_i & \vec{F}_j
    \end{bmatrix}
\end{equation*}
The concatenation of both node labels and atomic type is clearly injective. Finally, since there are at most $n^2$ distinct initial colors of the \fkwl{2}, said colors can be well represented within $\mathbb{R}$, hence there exists an injective $\phi$ in \Cref{eq:higher_order_tokens} with $d=1$. This completes the proof.
\end{proof}

We now show \Cref{theorem:k_fwl}. Specifically, we show the following two propositions from which \Cref{theorem:k_fwl} follows.

\begin{proposition}\label{prop:theorem_forward}
Let $G = (V(G), E(G), \ell)$ be a node-labeled graph with $n$ nodes and $\vec{F} \in \mathbb{R}^{n \times p}$ be a node feature matrix consistent with $\ell$. Then for all $t \geq 0$, there exists a parametrization of the ET such that
\begin{equation*}
    C^{2, \text{F}}_t(\vec{v}) = C^{2, \text{F}}_t(\vec{w}) \Longleftarrow \vec{X}^{(t)}(\vec{v}) = \vec{X}^{(t)}(\vec{w}),
\end{equation*}
for all pairs of $2$-tuples $\vec{v}$ and $\vec{w} \in V(G)^2$.
\end{proposition}
\begin{proof}
We begin by stating that our domain is compact since the ET merely operates on at most $n$ possible node features in $\vec{F}$ and binary edge features in $\vec{E}$, and at each iteration there exist at most $n^2$ distinct \fkwl{2} colors. 
We prove our statement by induction over iteration $t$. For the base case, we can simply invoke \Cref{prop:higher_order_tokens} since our input tokens are constructed according to \Cref{eq:higher_order_tokens}. Nonetheless, we show a possible initialization of the tokenization that is consistent with \Cref{eq:higher_order_tokens} that we will use in the induction step. 

From \Cref{prop:higher_order_tokens}, we know
that the color representation of a tuple can be represented in $\mathbb{R}$. We denote the color representation of a tuple $\vec{u} = (i,j)$ at iteration $t$ as $\vec{T}^{(t)}(\vec{u})$ and $\vec{T}^{(t)}_{ij}$ interchangeably.
We choose a $\phi$ in \Cref{eq:higher_order_tokens} such that for each $\vec{u} = (i,j)$
\begin{equation*}
    \vec{X}^{(0)}_{ij} = \begin{bmatrix}
        \vec{T}^{(0)}_{ij} &
        \Big(\vec{T}^{(0)}_{ij} \Big)^{n^2}
    \end{bmatrix} \in \mathbb{R}^2,
\end{equation*}
where we store the tuple features, one with exponent $1$ and once with exponent $n^2$ and where $\vec{T}^{(0)}_{ij} \in \mathbb{R}$ and $\Big(\vec{T}^{(0)}_{ij} \Big)^{n^2} \in \mathbb{R}$. We choose color representations $\vec{T}^{(0)}_{ij}$ as follows.
First, we define an injective function $f_t: V(G)^2 \rightarrow [n^2]$ that maps each $2$-tuple $\vec{u}$ to a number in $[n^2]$ unique for its \fkwl{2} color $C^{2, \text{F}}_t(\vec{u})$ at iteration $t$. Note that $f_t$ can be injective because there can at most be $[n^2]$ unique numbers under the \fkwl{2}. We will use $f_t$ to map each tuple color under the \fkwl{2} to a unique $n$-ary number.
We then choose $\phi$ in \Cref{eq:higher_order_tokens} such that for each $(i,j) \in V(G)^2$,
\begin{equation*}
    \big|\big| \vec{T}^{(0)}_{ij} - n^{-f_0(i,j)} \big|\big|_F < \epsilon_0,
\end{equation*}
for all $\epsilon_0 > 0$, by the universal function approximation theorem, which we can invoke since our domain is compact. We will use $\Big(\vec{T}^{(0)}_{ij} \Big)^{n^2}$ in the induction step; see below.

For the induction, we assume that
\begin{equation*}
    C^{2,\text{F}}_{t-1}(\vec{v}) = C^{2,\text{F}}_{t-1}(\vec{w}) \Longleftarrow \vec{T}^{(t-1)}(\vec{v}) = \vec{T}^{(t-1)}(\vec{w})
\end{equation*}
and that
\begin{equation*}
    \big|\big| \vec{T}^{(t-1)}_{ij} - n^{-f_{t-1}(i, j)} \big|\big|_F < \epsilon_{t-1},
\end{equation*}
for all $\epsilon_{t-1} > 0$ and $(i,j) \in V(G)^2$.
We then want to show that there exists a parameterization of the $t$-th layer such that
\begin{equation}\label{eq:2fwl_proof_inductive_case}
    C^{2,\text{F}}_t(\vec{v}) = C^{2,\text{F}}_t(\vec{w}) \Longleftarrow \vec{T}^{(t)}(\vec{v}) = \vec{T}^{(t)}(\vec{w})
\end{equation}
and that
\begin{equation*}
    \big|\big| \vec{T}^{(t)}_{ij} - n^{-f_{t}(i, j)} \big|\big|_F < \epsilon_t,
\end{equation*}
for all $\epsilon_t > 0$ and $(i,j) \in V(G)^2$.
Clearly, if this holds for all $t$, then the proof statement follows.
Thereto, we show that the ET updates the tuple representation of tuple $(j,m)$ as
\begin{equation}\label{eq:2fwl_proof_X_update}
    \vec{T}^{(t)}_{jm} = \mathsf{FFN} \Big( \vec{T}^{(t-1)}_{jm} + \frac{\beta}{n} \sum_{l=1}^n \vec{T}^{(t-1)}_{jl} \cdot \Big(\vec{T}^{(t-1)}_{lm}\Big)^{n^2} \Big),
\end{equation}
for an arbitrary but fixed $\beta$. We first show that then, \Cref{eq:2fwl_proof_inductive_case} holds. Afterwards we show that the ET can indeed compute \Cref{eq:2fwl_proof_X_update}.
To show the former, note that for two $2$-tuples $(j,l)$ and $(l, m)$,
\begin{equation*}
   n^{-n^2} \cdot n^{-f_{t-1}(j, l)} \cdot \Big(n^{-f_{t-1}(l, m)}\Big)^{n^2} = n^{-(n^2 + f_{t-1}(j, l) + n^2 \cdot f_{t-1}(l, m))},
\end{equation*}
is unique for the pair of colors
\begin{equation*}
    \big( C^{2,\text{F}}_t((j,l)), C^{2,\text{F}}_t((l,m)) \big)
\end{equation*}
where $n^{-n^2}$ is a constant normalization term we will later introduce with $\frac{\beta}{n}$.
Note further, that we have
\begin{equation*}
    \big|\big| \vec{T}^{(t-1)}_{jl} \cdot \Big(\vec{T}^{(t-1)}_{lm}\Big)^{n^2} - n^{-(n^2 + f_{t-1}(j, l) + n^2 \cdot f_{t-1}(l, m))} \big|\big|_F < \delta_{t-1},
\end{equation*}
for all $\delta_{t-1} > 0$. Further, $n^{-(f_{t-1}(j, l) + n^2 \cdot f_{t-1}(l, m))}$ is still an $m$-ary number with $m = n$.
As a result, we can
set $\beta = n^{-n^2+1}$ and
invoke \Cref{lemma:multiset_m_ary} to obtain that
\begin{equation*}
\frac{\beta}{n} \cdot \sum_{l = 1}^n n^{-(f_{t-1}(j, l) + n^2 \cdot f_{t-1}(l, m))} = \sum_{l = 1}^n n^{-(n^2 + f_{t-1}(j, l) + n^2 \cdot f_{t-1}(l, m))},
\end{equation*}
is unique for the multiset of colors
\begin{equation*}
    \{\!\! \{ ( C^{2, \text{F}}_{t-1}((l,m)), C^{2, \text{F}}_{t-1}((j,l)) ) \mid l \in V(G) \} \!\!\},
\end{equation*}
and we have that
\begin{equation*}
    \big|\big| \frac{\beta}{n} \sum_{l=1}^n \vec{T}^{(t-1)}_{jl} \cdot \Big(\vec{T}^{(t-1)}_{lm}\Big)^{n^2} - \sum_{l = 1}^n n^{-(n^2 + f_{t-1}(j, l) + n^2 \cdot f_{t-1}(l, m))} \big|\big|_F < \gamma_{t-1},
\end{equation*}
for all $\gamma_{t-1} > 0$.
Finally, 
we define
\begin{align*}
    A &\coloneqq \Big\{ n^{-f_{t-1}(j, m)} \mid (j,m) \in V(G)^2 \Big\} \\
    B &\coloneqq \Big\{ \frac{\beta}{n} \cdot \sum_{l = 1}^n n^{-(f_{t-1}(j, l) + n^2 \cdot f_{t-1}(l, m))}  \mid (j,m) \in V(G)^2 \Big\}.
\end{align*}
Further, because we multiply with $\frac{\beta}{n}$, we have that
\begin{equation*}
  \min_{x \in A} x > \max_{y \in B} y 
\end{equation*}
and as a result, by \Cref{lemma:non_overlapping_m-ary_sets},
\begin{equation*}
   n^{-f_{t-1}(j, m)} + \frac{\beta}{n} \cdot \sum_{l = 1}^n n^{-(f_{t-1}(j, l) + n^2 \cdot f_{t-1}(l, m))} %
\end{equation*}
is unique for the pair
\begin{equation*}
    \big(C^{2,\text{F}}_{t-1}((j,m)), \{\!\! \{ ( C^{2, \text{F}}_{t-1}((l,m)), C^{2, \text{F}}_{t-1}((j,l)) ) \mid l \in V(G) \} \!\!\} \big)
\end{equation*}
and consequently for color $C^{2,\text{F}}_t((j,m))$ at iteration $t$. Further, we have that
\begin{equation*}
    \big|\big| \vec{T}^{(t-1)}_{jm} + \frac{\beta}{n} \sum_{l=1}^n \vec{T}^{(t-1)}_{jl} \cdot \Big(\vec{T}^{(t-1)}_{lm}\Big)^{n^2} - n^{-f_{t-1}(j, m)} + \frac{\beta}{n} \cdot \sum_{l = 1}^n n^{-(f_{t-1}(j, l) + n^2 \cdot f_{t-1}(l, m))} \big|\big|_F < \tau_{t-1},
\end{equation*}
for all $\tau_{t-1} > 0$.
Finally, since our domain is compact, we can invoke universal function approximation with \textsf{FFN} 
in \Cref{eq:2fwl_proof_X_update} to obtain
\begin{equation*}
    \big|\big| \vec{T}^{(t)}_{jm} - n^{-f_{t}(j, m)} \big|\big|_F < \epsilon_t,
\end{equation*}
for all $\epsilon_t > 0$. 
Further, because $n^{-f_{t}(j, m)}$ is unique for each unique color $C^{2,\text{F}}_t((j,m))$, \Cref{eq:2fwl_proof_inductive_case} follows.

It remains to show that the ET can indeed compute \Cref{eq:2fwl_proof_X_update}.
To this end, we will require a single transformer head in each layer. 
Specifically, we want this head to compute
\begin{align}
    h_1(\vec{X}^{(t-1)})_{jm} &= \frac{\beta}{n}\sum_{l=1}^n \vec{T}^{(t-1)}_{jl} \cdot \Big(\vec{T}^{(t-1)}_{lm}\Big)^{n^2}  \label{eq:2fwl_proof_head1}.
\end{align}
Now, recall the definition of the Edge Transformer head at tuple $(j,m)$ as
\begin{equation*}
    h_1(\vec{X}^{(t-1)})_{jm} \coloneqq \sum_{l=1}^n  \alpha_{jlm} \vec{V}^{(t-1)}_{jlm},
\end{equation*}
where
\begin{equation*}
    \alpha_{jlm} \coloneqq \underset{l \in [n]}{\mathsf{softmax}} \Big( \frac{1}{\sqrt{d_k}} \vec{X}^{(t-1)}_{jl}\vec{W}^Q  (\vec{X}^{(t-1)}_{lm}\vec{W}^K)^T \Big)
\end{equation*}
with
\begin{equation*}
    \vec{V}_{jlm}^{(t-1)} \coloneqq \vec{X}^{(t-1)}_{jl} \begin{bmatrix}
        \vec{W}^{V_1}_1 \\
        \vec{W}^{V_1}_2
    \end{bmatrix} \odot \vec{X}^{(t-1)}_{lm}\begin{bmatrix}
        \vec{W}^{V_2}_1 \\
        \vec{W}^{V_2}_2
    \end{bmatrix}
\end{equation*}
and by the induction hypothesis above,
\begin{align*}
    \vec{X}^{(t-1)}_{jl} &= \begin{bmatrix}
        \vec{T}^{(t-1)}_{jl} &
        \Big(\vec{T}^{(t-1)}_{jl} \Big)^{n^2}
        \end{bmatrix} \\
    \vec{X}^{(t-1)}_{lm} &= \begin{bmatrix}
        \vec{T}^{(t-1)}_{lm} &
        \Big(\vec{T}^{(t-1)}_{lm} \Big)^{n^2}
        \end{bmatrix},
\end{align*}
where we expanded sub-matrices. Specifically, $\vec{W}^{V_1}_1, \vec{W}^{V_2}_1, \vec{W}^{V_1}_2, \vec{W}^{V_2}_2 \in \mathbb{R}^{ \frac{d}{2} \times d}$.
We then set
\begin{align*}
    \vec{W}^Q &= \vec{W}^K = \vec{0} \\
    \vec{W}^{V_1}_1 &= \begin{bmatrix}
        \beta \vec{I} & \vec{0}
    \end{bmatrix} \\
    \vec{W}^{V_1}_2 &= \begin{bmatrix}
        \vec{0} & \vec{0}
    \end{bmatrix} \\
    \vec{W}^{V_2}_1 &= \begin{bmatrix}
        \vec{0} & \vec{I}
    \end{bmatrix} \\
    \vec{W}^{V_2}_2 &= \begin{bmatrix}
        \vec{0} & \vec{0}
    \end{bmatrix}.
\end{align*}
Here, $\vec{W}^Q$ and $\vec{W}^K$ are set to zero to obtain uniform attention scores.
Note that then for all $j, l, k$, $\alpha_{jlm} = \frac{1}{n}$, due to normalization over $l$, and we end up with \Cref{eq:2fwl_proof_head1} as
\begin{equation*}
   h_1(\vec{X}^{(t-1)})_{jm} = \frac{1}{n} \sum_{l=1}^n  \vec{V}^{(t-1)}_{jlm} %
\end{equation*}
where
\begin{equation*}
    \begin{split}
        \vec{V}_{jlm}^{(t-1)} &= \begin{bmatrix}
        \vec{T}^{(t-1)}_{jl} \cdot \beta \vec{I} +
        \Big(\vec{T}^{(t-1)}_{jl} \Big)^{n^2} \cdot \vec{0} & \vec{0}
        \end{bmatrix} \odot \begin{bmatrix}
        \vec{T}^{(t-1)}_{lm} \cdot \vec{0} +
        \Big(\vec{T}^{(t-1)}_{lm} \Big)^{n^2} \cdot \vec{I}
         & \vec{0}
        \end{bmatrix} \\
        &= \beta \cdot \begin{bmatrix}
        \vec{T}^{(t-1)}_{jl} \cdot \Big(\vec{T}^{(t-1)}_{lm} \Big)^{n^2}
        & \vec{0}
        \end{bmatrix}.
    \end{split}
\end{equation*} 
We now conclude our proof as follows. Recall that the Edge Transformer layer computes the final representation $\vec{X}^{(t)}$ as
\begin{equation*}
\begin{split}
    \vec{X}^{(t)}_{jm} &= \mathsf{FFN} \Bigg( \vec{X}^{(t-1)}_{jm} + h_1(\vec{X}^{(t-1)})_{jm} \vec{W}^O \Bigg) \\
    &=\mathsf{FFN} \Bigg( \begin{bmatrix}
        \vec{T}^{(t-1)}_{jm} & \Big(\vec{T}^{(t-1)}_{jm} \Big)^{n^2}
        \end{bmatrix} + \frac{\beta}{n}\sum_{l=1}^n \begin{bmatrix}
        \vec{T}^{(t-1)}_{jl} \cdot \vec{T}^{(t-1)}_{lm}
        & \vec{0}
        \end{bmatrix} \vec{W}^O \Bigg) \\
    &\underset{\vec{W}^O \coloneqq \vec{I}}{=}\mathsf{FFN} \Bigg( \begin{bmatrix}
        \vec{T}^{(t-1)}_{jm} & \Big(\vec{T}^{(t-1)}_{jm} \Big)^{n^2}
        \end{bmatrix} +  \begin{bmatrix}
        \frac{\beta}{n}\sum_{l=1}^n \vec{T}^{(t-1)}_{jl} \cdot \vec{T}^{(t-1)}_{lm}
        & \vec{0}
        \end{bmatrix} \Bigg) \\
       &=\mathsf{FFN} \Bigg( \begin{bmatrix}
        \vec{T}^{(t-1)}_{jm} + \frac{\beta}{n}\sum_{l=1}^n  \vec{T}^{(t-1)}_{jl} \cdot \vec{T}^{(t-1)}_{lm} & \Big(\vec{T}^{(t-1)}_{jm} \Big)^{n^2}
        \end{bmatrix} \Bigg) \\
        &\underset{Eq. \ref{eq:2fwl_proof_X_update}}{=}\mathsf{FFN} \Bigg( \begin{bmatrix}
        \vec{T}^{(t)}_{jm}  & \Big(\vec{T}^{(t-1)}_{jm} \Big)^{n^2}
        \end{bmatrix} \Bigg)
\end{split}
\end{equation*}
for some $\mathsf{FFN}$. Note that the above derivation only modifies the terms inside the parentheses and is thus independent of the choice of $\mathsf{FFN}$. We have thus shown that the ET can compute \Cref{eq:2fwl_proof_X_update}.

To complete the induction, let $f \colon \mathbb{R}^2 \rightarrow \mathbb{R}^2$ be such that
\begin{equation*}
    f\Bigg(\begin{bmatrix}
        \vec{T}^{(t)}_{jm}  & \Big(\vec{T}^{(t-1)}_{jm} \Big)^{n^2}
        \end{bmatrix}\Bigg) = \begin{bmatrix}
        \vec{T}^{(t)}_{jm}  & \Big(\vec{T}^{(t)}_{jm} \Big)^{n^2}
        \end{bmatrix}.
\end{equation*}
Since our domain is compact, $f$ is continuous, and hence we can choose $\mathsf{FFN}$ to approximate $f$ arbitrarily close. This completes the proof.
\end{proof}

\newcommand{\set}[1]{\ensuremath{\{ #1 \}}}
\newcommand{\multiset}[1]{\ensuremath{\{\!\!\{ #1 \}\!\!\}}}

Next, we show the other direction of \Cref{theorem:k_fwl} under mild and reasonable assumptions. First, we say that a recoloring function, that maps structures over positive integers into positive integers, is \emph{(effectively) invertible} if its inverse is computable. All coloring functions used in practice (e.g., hash-based functions, those based on pairing functions, etc) are invertible. Second, the layer normalization operation is a proper function if it uses statistics collected only during training mode, and not during evaluation mode.

\begin{proposition}\label{prop:theorem_backward}
Let $\REL$ be an invertible function, and let us consider the \fkwl{2} coloring algorithm using $\REL$.
Then, for all parametrizations of the ET with proper layer normalization, for all node-labeled graphs $G=(V(G),E(G),\ell)$, and for all $t\geq 0$:
\begin{equation*}
    C^{2, \text{F}}_t(\vec{v}) = C^{2, \text{F}}_t(\vec{w}) \Longrightarrow \vec{X}^{(t)}(\vec{v}) = \vec{X}^{(t)}(\vec{w}),
\end{equation*}
for all pairs of 2-tuples $\vec{v}$ and $\vec{w}$ in $V(G)^2$.
\end{proposition}
\begin{proof}
We first claim that there is a computable function $Z:\mathbb{N}^*\times\mathbb{N} \rightarrow \mathbb{R}^p$, where $\mathbb{N}^*=\{0\}\cup\mathbb{N}$, such that $\vec{X}^{(t)}(\vec{v})=Z(t,C^{2,F}_t(\vec{v}))$ for all $\vec{v}\in V(G)^2$, independent of the graph $G$ and its order.
The proof of the claim is by induction on $t$. For $t=0$, by definition, $C^{2,F}_0(\vec{v})$ identifies the atomic type $\text{atp}_2(\vec{v})$ which defines $\vec{X}^{(0)}(\vec{v})$ (since the atomic type tells if $v$ is an edge in $G$, and the labels of the vertices in $\vec{v}$).

For $t>0$ and $\vec{v}=(i,j)$, the function $Z(t,C^{2,F}_t(\vec{v}))$ proceeds as follows. First, it uses the invertibility of $\REL$ to obtain the pair
\begin{equation*}
   \bigg( C^{2,F}_{t-1}(i,j), \,\multiset{ \big( C^{2,F}_{t-1}(i,l), C^{2,F}_{t-1}(l,j)\big) \mid l \in V(G) } \bigg)\,.
\end{equation*}
Then, by inductive hypothesis using the function $Z(t-1,\cdot)$, it obtains the pair
\begin{equation*}
    \bigg( \vec{X}^{(t-1)}(i,j), \,\multiset{ \big( \vec{X}^{(t-1)}(i,l), \vec{X}^{(t-1)}(l,j) \big) \mid l \in V(G) } \bigg) \,.
\end{equation*}
Finally, it computes
\begin{equation*}
    \vec{X}^{(t)}(i,j) = \mathsf{FFN}\bigg( \vec{X}^{(t-1)}(i,j) + \triattn\big( \mathsf{LN}\big( \vec{X}^{(t-1)}(i,j) \big) \big) \bigg)
\end{equation*}
under the assumption that the layer normalization is a proper function.
The statement of the proposition then follows directly from the claim since
\begin{equation*}
   \vec{X}^{(t)}(\vec{v}) = Z(t, C^{2,F}_t(\vec{v})) = Z(t, C^{2,F}_t(\vec{w})) = \vec{X}^{(t)}(\vec{w}) \,.
\end{equation*}
\Omit{
We prove the claim by induction on $t$. 
For $t=0$, by definition, $C^{2,F}_0(\vec{v})=C^{2,F}_0(\vec{w})$ iff $\text{atp}_2(\vec{v})=\text{atp}_2(\vec{w}$ which implies $\vec{X}^{(0)}(\vec{v})=\vec{X}^{(0)}(\vec{w})$.

the function $\REL$ just maps the initial embedding $\vec{X}^{(0)}_{ij}$ to a color $C^{2, \text{F}}_{0}(i, j)$ unique for each unique value of $\vec{X}^{(0)}_{ij}$. Since there at most $n^2$ possible embeddings $\vec{X}^{(0)}_{ij}$, such a mapping always exists and is bijective.
We denote this mapping $\tau_0$ and have that

We begin by stating that our domain is compact since the ET merely operates on at most $n$ possible node features in $\vec{F}$ and binary edge features in $\vec{E}$ and at each iteration $t \geq 0$ there exist at most $n^2$ distinct embeddings $\vec{X}^{(t)}(i,j)$, for all $i, j \in V(G)$.
Further, we recall iteration $t > 0$ of the \fkwl{2} as computing
\begin{equation*}
    C^{2, \text{F}}_{t}(i, j) = \REL\Big(\big(C^{2, \text{F}}_{t-1}(i, j), \multiset{(C^{2, \text{F}}_{t-1}(i, l), C^{2, \text{F}}_{t-1}(l, j)) \mid l \in V(G)}\big)\Big)
\end{equation*}
for all pairs of nodes $i, j \in V(G)$, where $\REL$ is an invertible function mapping tuples of colors to colors.

We prove the statement by induction over $t$. For $t=0$, the function $\REL$ just maps the initial embedding $\vec{X}^{(0)}_{ij}$ to a color $C^{2, \text{F}}_{0}(i, j)$ unique for each unique value of $\vec{X}^{(0)}_{ij}$. Since there at most $n^2$ possible embeddings $\vec{X}^{(0)}_{ij}$, such a mapping always exists and is bijective.
We denote this mapping $\tau_0$ and have that
\begin{equation*}
    \tau_0(\vec{X}^{(0)}_{ij}) = C^{2, \text{F}}_{0}(i, j)
\end{equation*}
for all pairs of nodes $i, j \in V(G)$.
We will use $\tau_0$ in the induction step. Clearly, we have that
\begin{equation*}
    C^{2, \text{F}}_0(\vec{v}) = C^{2, \text{F}}_0(\vec{w}) \Longrightarrow \vec{X}^{(0)}(\vec{v}) = \vec{X}^{(0)}(\vec{w}),
\end{equation*}
for all pairs of $2$-tuples $\vec{v}$ and $\vec{w} \in V(G)^2$.

For $t>0$, we describe $\REL$ step-by-step. By the induction hypothesis we have that
\begin{equation*}
    C^{2, \text{F}}_{t-1}(\vec{v}) = C^{2, \text{F}}_{t-1}(\vec{w}) \Longrightarrow \vec{X}^{(t-1)}(\vec{v}) = \vec{X}^{(t-1)}(\vec{w}),
\end{equation*}
for all pairs of $2$-tuples $\vec{v}$ and $\vec{w} \in V(G)^2$.
Further, we have an invertible mapping $\tau_{t-1}$ such that
\begin{equation*}
    \tau_{t-1}(\vec{X}^{(t-1)}_{ij}) = C^{2, \text{F}}_{t-1}(i, j),
\end{equation*}
for all pairs of nodes $i, j \in V(G)$.
First, $\REL$ decodes its input to obtain colors $C^{2, \text{F}}_{t-1}(i, j)$ and multiset $\multiset{(C^{2, \text{F}}_{t-1}(i, l), C^{2, \text{F}}_{t-1}(l, j)) \mid l \in V(G)}$, which is possible since $\REL$ is invertible.
We define
\begin{align*}
    \hat{\vec{X}}^{(t-1)}_{ij} &\coloneqq \tau_{t-1}^{-1}(C^{2, \text{F}}_{t-1}(i, j)) \\
    &= \vec{X}^{(t-1)}_{ij}
\end{align*}
and
\begin{align*}
    \hat{\vec{Z}}^{(t-1)}_{ij} &\coloneqq \multiset{(\tau^{-1}_{t-1}(C^{2, \text{F}}_{t-1}(i, l)), \tau^{-1}_{t-1}(C^{2, \text{F}}_{t-1}(l, j))) \mid l \in V(G)} \\
    &= \multiset{(\vec{X}^{(t-1)}_{il}, \vec{X}^{(t-1)}_{lj}) \mid l \in V(G)},
\end{align*}
where $\REL$ uses the inverse of $\tau_{t-1}^{-1}$ to decode $C^{2, \text{F}}_{t-1}(i, j)$ into its corresponding ET embedding and the multiset further into a multiset of ET embeddings at iteration $t-1$.
Now, $\REL$ computes
\begin{equation*}
    \vec{X}^{(t)}_{ij} = \mathsf{FFN} \Big( \vec{X}^{(t-1)}_{ij} + \triattn \big(\vec{X}^{(t-1)}_{ij} \big)\Big),
\end{equation*}
where the first summand in the $\mathsf{FFN}$ is obtained from $\hat{\vec{X}}^{(t-1)}_{ij}$ and the second summand in the $\mathsf{FFN}$ is obtained from $\hat{\vec{Z}}^{(t-1)}_{ij}$ since  $\triattn(\vec{X}^{(t-1)}_{ij})$ is a function of $\hat{\vec{Z}}^{(t-1)}$. 

To conclude the induction step, $\REL$ maps $\vec{X}^{(t)}_{ij}$ to a color $C^{2, \text{F}}_{t}(i, j)$ unique for each unique value of $\vec{X}^{(t)}_{ij}$. Since there at most $n^2$ possible embeddings $\vec{X}^{(t)}_{ij}$, such a mapping always exists and is bijective.
We denote this mapping
\begin{equation*}
    \tau_{t}(\vec{X}^{(t)}_{ij}) \coloneqq C^{2, \text{F}}_{t}(i, j).
\end{equation*}
This concludes the induction and hence, the proof.
}
\end{proof}

Note that unlike the result in \Cref{prop:theorem_forward}, the above result is uniform, in that the concrete choice of $\REL$ and the function $Z$ does not depend on the graph size $n$.
Finally, \Cref{theorem:k_fwl} follows from \Cref{prop:theorem_forward} and \Cref{prop:theorem_backward}.

\end{document}